\def\x{\bm{x}}
\def\g{\bm{g}}
\def\v{\bm{v}}
\def\bdelta{\bm{\delta}}
\def\m{\bm{m}}
\def\z{\bm{z}}
\def\bzeta{\bm{\zeta}}
\def\tl{{(l)}}
\def\obdelta{\overline{\bm{\delta}}}
\def\C{\mathcal{C}}
\def\Cref{\ref}
\def\bzeta{\bm{\zeta}}
\def\clip{{\text{clip}}}
\def\bzeta{\bm{\zeta}}
\def\ti{(i)}
\newcommand\blfootnote[1]{%
  \begingroup
  \renewcommand\thefootnote{}\footnote{#1}%
  \addtocounter{footnote}{-1}%
  \endgroup
}
\def\OA{{{1-bit LAMB}}}
\newcommand\numberthis{\addtocounter{equation}{1}\tag{\theequation}}
\newtheorem{theorem}{Theorem}%[section]
\newtheorem{lemma}{Lemma}%[Lemma]
\newtheorem{corollary}{Corollary}
\newtheorem{assumption}{Assumption}
\title{{\OA}: Communication Efficient Large-Scale Large-Batch Training with LAMB's Convergence Speed}
\author{Conglong Li, Ammar Ahmad Awan, *Hanlin Tang, Samyam Rajbhandari, Yuxiong He\\
Microsoft \\
*Department of Computer Science, University of Rochester \\
\texttt{\{Conglong.Li, ammar.awan, samyamr, yuxhe\}@microsoft.com} \\
}
\begin{document}
\date{}
\maketitle
\begin{abstract}
To train large models (like BERT and GPT-3) on hundreds of GPUs, communication has become a major bottleneck, especially on commodity systems with limited-bandwidth TCP network. On one side large batch-size optimization such as LAMB algorithm was proposed to reduce the frequency of communication. On the other side, communication compression algorithms such as 1-bit Adam help to reduce the volume of each communication. However, we find that simply using one of the techniques is not sufficient to solve the communication challenge, especially under low network bandwidth. Motivated by this we aim to combine the power of large-batch optimization and communication compression, but we find that existing compression strategies cannot be directly applied to LAMB due to its unique adaptive layerwise learning rates. To this end, we design a new communication-efficient algorithm, 1-bit LAMB, which introduces a novel way to support adaptive layerwise learning rates under compression. In addition, we introduce a new system implementation for compressed communication using the NCCL backend of PyTorch distributed, which improves both usability and performance. For BERT-Large pre-training task with batch sizes from 8K to 64K, our evaluations on up to 256 GPUs demonstrate that 1-bit LAMB with NCCL-based backend is able to achieve up to 4.6x communication volume reduction, up to 2.8x end-to-end time-wise speedup, and the same sample-wise convergence speed (and same fine-tuning task accuracy) compared to uncompressed LAMB.
\blfootnote{Hanlin Tang participated in this work during his internship at Microsoft}
\end{abstract}

\section{Introduction}

Training large-scale deep learning models in a distributed fashion is computation-heavy and expensive~\citep{NEURIPS2020_1457c0d6}. In addition to computation, communication overhead becomes a serious system challenge for such large models. A recent study of BERT pre-training with Adam demonstrates that the allreduce communication can take up to 94\% and 75\% of total training time per step on clusters with Ethernet and InfiniBand inter-node connections, respectively~\citep{tang20211bit}.

To achieve communication efficient distributed training, there are two promising directions: large batch optimization and communication compression. \textbf{LAMB} optimizer, which can be viewed as Adam with adaptive layerwise learning rates, is an example of large batch optimization~\citep{lamb}. LAMB can scale the batch size of BERT pre-training to 64K without losing accuracy, thereby greatly reducing the total training time as larger batch sizes leads to less frequent communication. On the other hand, recent works on communication compression such as \textbf{1-bit Adam} demonstrate that it is possible to combine 1-bit compression with Adam's convergence speed, thereby reduce BERT pre-training communication volume by 5x~\citep{tang20211bit}.

Both LAMB and 1-bit Adam demonstrate great benefit for distributed training. Unfortunately, our studies show that simply using one of them is not sufficient to fully address the communication issue, especially under limited network bandwidth and large number of GPUs/machines (Section~\ref{sec:moti}). We find that communication is still a non-trivial overhead when running large-scale distributed training with LAMB, even with the larger batch sizes. Previous study shows that Adam provides slower convergence speed compared to LAMB at batch sizes 16K or larger for BERT pre-training~\citep{lamb}. Using the same methodology, our BERT experiments show that 1-bit Adam, similar to Adam, also has slower convergence speed compared to LAMB at batch size 16K. Even with the communication compression, this batch size limitation would hurt the communication efficiency when the number of GPUs/machines is large.

LAMB and 1-bit Adam are two unique optimizers. However, the techniques behind them are complementary: large batch optimization reduces the frequency of communication, and compression reduces the volume of communication. Motivated by this we aim to combine LAMB's large batch optimization algorithm with compression strategies behind 1-bit Adam. However, we find that they are not directly compatible due to LAMB's unique layerwise learning rate update strategy, which requires information that are missing when communication and optimizer states are compressed (Section~\ref{sec:moti}).

The studies and challenges above motivate us to design a new algorithm called \textbf{{\OA}} (Section~\ref{sec:algo}). Learning from the insights behind 1-bit Adam, {\OA} is a 2-stage algorithm which uses LAMB (warmup stage) to ``pre-condition'' a communication compressed momentum SGD algoirthm (compression stage). At compression stage where original LAMB algorithm cannot be used to update the layerwise learning rates, {\OA} employs a novel way to adaptively scale layerwise learning rates based on information from both warmup and compression stages. As a result, {\OA} is able to achieve large batch optimization (LAMB)'s convergence speed under compressed communication, which is impossible using existing approaches.

In addition to the {\OA} algorithm, we propose a new NCCL-based compressed communication backend which provides better usability and performance than previous work (Section~\ref{sec:implementation}). This backend can be applied to {\OA}, 1-bit Adam, and other communication compression algorithms. We evaluate {\OA} using BERT pre-training and GLUE/SQuAD fine-tuning tasks (Section~\ref{sec:eval}). Results show that under different batch sizes from 8K to 64K and with up to 256 GPUs, {\OA} with NCCL-based backend is able to achieve up to 4.6x communication volume reduction and up to 2.8x end-to-end time-wise speedup for BERT pre-training compared to uncompressed LAMB, together with the same sample-wise convergence speed and same GLUE/SQuAD fine-tuning task accuracy. The {\OA} optimizer as well as the NCCL-based communication backend has been open sourced in a deep learning optimization library called DeepSpeed\footnote{https://github.com/microsoft/DeepSpeed, https://www.deepspeed.ai/}.

% The {\OA} optimizer as well as the NCCL-based communication backend has been open sourced in a deep learning optimization library (name hidden to maintain anonymity).
% The {\OA} optimizer as well as the NCCL-based communication backend has been open sourced in a deep learning optimization library called DeepSpeed\footnote{https://github.com/microsoft/DeepSpeed, https://www.deepspeed.ai/}.

\section{Related Work and Background}
% \subsection{Communication compression}
To achieve communication efficient distributed training, techniques include decentralization~\citep{Lian2017-ni, Koloskova*2020Decentralized, NIPS2018_8028}, asynchronous communication~\citep{DBLP:journals/corr/ZhengMWCYML16, NIPS2015_6031}, and gradient compression/quantization which we focus on in this paper. Before communication, we could compress the original gradient $\g$ into $\mathcal{C}_{\omega}[\g]$, where $\C_{\omega}[\cdot]$ is the compress operator{\footnote{$\C_{\omega}[\cdot]$ could also include randomness.}}. As a result the communication volume could be greatly reduced. Compression can be achieved by quantization, sparsification, sketching, etc.~\citep{Ye2018-mf,Alistarh2017-yh,Agarwal2018-hg,NIPS2019_8694,Spring2019-ep,NIPS2019_9473,MLSYS2021_Shaohuai}. Several works focus on unbiased compression methods (original and compressed tensors have the same expectation), such as centralized compressed parallel SGD~\citep{Alistarh2017-yh} and many others~\citep{Wangni2018-ux,pmlr-v80-shen18a,pmlr-v70-zhang17e,NIPS2017_6749,NIPS2018_7519}. On the other hand, recent works about biased compression methods demonstrate better compression rate and the same convergence rate by using an error compensation technique~\citep{1-bitexp,Bernstein:2018aa,martinmemory,NIPS2019_9321,9051706,NIPS2019_8694,8884924,NIPS2019_9473,NIPS2019_8598,NIPS2019_9610,NIPS2019_9571,tang20211bit}.

% \subsection{Error-compensated compression}
The error-compensated compression is proposed in the 1-bit SGD work~\citep{1-bitexp}: instead of compressing the gradient at each iteration directly, they compress the sum of the gradient and the last step's compression error. By using error compensation the training can achieve promising convergence speed even with 1-bit compression (representing the gradient by $\pm 1$ signs and a scale). Recent works provide theoretical guarantee of this method~\citep{Bernstein:2018aa}, and also demonstrate that it admits the same asymptotic convergence rate as the uncompressed one~\citep{martinmemory}. In addition, error compensation method enables almost any compression methods~\citep{martinmemory}, either biased or unbiased, to converge as fast as the uncompressed case. 

% And it is compatible with other techniques including decentralized training \citep{ec_decentralize}, local SGD \citep{ec_local}, and accelerated algorithms \citep{ec_linearly}.

% \subsection{Adam, 1-bit Adam, and LAMB}
% Influenced by previous works that study adaptive learning rate (Adagrad~\citep{JMLR:v12:duchi11a}, RMSprop~\citep{rmsprop}, Adadelta~\citep{DBLP:journals/corr/abs-1212-5701}, etc.), 
% Adam's update rule can be summarized as (for simplicity weight decay is omitted):
% \begin{align*}
% &\bm{m}_t = \beta_1\bm{m}_{t-1} + (1-\beta_1)\g_t\\
% &\bm{v}_t = \beta_2\bm{v}_{t-1} + (1-\beta_2)(\bm{g}_t)^2\\
% &\bm{\x}_t = \x_{t-1} - \gamma\frac{\m_t}{\sqrt{\v_t}+\eta},\numberthis\label{alg:adam}
% \end{align*}
% where $\g_t = \nabla F(\x_t;\bm{\xi}_t)$ is the stochastic gradient at step $t$, $\m_t$ is the momentum, $\v_t$ is the second moment of the gradient (i.e., the variance), $\beta_1$ and $\beta_2$ are the decaying factor, $\x_t$ is the model, $\gamma$ is the learning rate, $\eta$ is an additive constant that ensures that we do not divide by 0. Here $(\bm{x})^2$, $\sqrt{\bm{x}}$ and $\frac{\bm{x}}{\bm{y}}$ all denote element-wise operations.
\textbf{Adam}~\citep{Kingma2015AdamAM} can be viewed as SGD with momentum and adaptive learning rate scaling on each coordinate of the gradient. It has demonstrated promising convergence speed and hyperparameter robustness on many deep learning tasks. Recently, \citet{tang20211bit} proposed \textbf{1-bit Adam} which combines the efficiency of error-compensated 1-bit compression with Adam's convergence speed. They show that error-compensated compression does not work for Adam directly, because Adam is non-linearly dependent on the gradient (the variance term). On the other hand, they find that Adam's variance becomes stable at an early stage of training. To this end, they design a new 2-stage algorithm, {1-bit Adam}: At warmup stage, vanilla Adam is used. At compression stage, they stop updating the variance and use it as a fixed precondition, and communicate based on the momentum applied with error-compensated 1-bit compression. Their experiments on up to 256 GPUs show that {1-bit Adam} achieve the same convergence behaviour and final accuracy as Adam, together with up to 5x less communication volume and 3.3x faster end-to-end throughput.

To further improve training efficiency at large scale, being able to support large minibatches while keeping the convergence speed is a critical factor. Recently \citet{lamb} find that it is difficult to keep Adam's convergence speed at batch sizes 16K or larger for BERT pre-training. To this end they proposed \textbf{LAMB} which can be viewed as Adam with adaptive layerwise learning rates. By using LAMB, they are able to scale the batch size of BERT pre-training to $64K$ without losing accuracy, thereby, reducing the BERT training time from 3 days to around 76 minutes. The major idea of LAMB is that it utilizes a layerwise scaling coefficient to regulate the update of each layer, and the updating rule can be summarized as\footnote{Here $(\bm{x})^2$, $\sqrt{\bm{x}}$ and $\frac{\bm{x}}{\bm{y}}$ all denote element-wise operations. For simplicity weight decay is omitted.}:
% \begin{align*}
% \bm{m}_t^\tl =& \beta_1\bm{m}_{t-1}^\tl + (1-\beta_1)\g_t^\tl\\
% \bm{v}_t^\tl =& \beta_2\bm{v}_{t-1}^\tl + (1-\beta_2)(\bm{g}_t^\tl)^2\\
% \bm{u}_t^\tl =& \frac{\m_t^\tl}{\sqrt{\v_t^\tl}+\eta}\\
% c_t^\tl =& \clip\left(\frac{\|\x_{t-1}^\tl\|}{\|\bm{u}_t^\tl\|}, c_{min}, c_{max}\right)\\ 
% \bm{\x}_t^\tl =& \x_{t-1}^\tl - \gamma c_t^\tl\bm{u}_t^\tl.\numberthis\label{alg:lamb}
% \end{align*}
\begin{equation*}
\begin{split}
\bm{m}_t^\tl =& \beta_1\bm{m}_{t-1}^\tl + (1-\beta_1)\g_t^\tl, \bm{v}_t^\tl = \beta_2\bm{v}_{t-1}^\tl + (1-\beta_2)(\bm{g}_t^\tl)^2 \\
\bm{u}_t^\tl =& \frac{\m_t^\tl}{\sqrt{\v_t^\tl}+\eta}, c_t^\tl = \clip\left(\frac{\|\x_{t-1}^\tl\|}{\|\bm{u}_t^\tl\|}, c_{min}, c_{max}\right) \\
\bm{\x}_t^\tl =& \x_{t-1}^\tl - \gamma c_t^\tl\bm{u}_t^\tl.
\end{split}
\numberthis\label{alg:lamb}
\end{equation*}
Here $\g_t^\tl = \nabla F(\x_t;\bm{\xi}_t)$, $\m_t^\tl$, $\v_t^\tl$, $\x_t^\tl$ denote the stochastic gradient, momentum, second moment (i.e., the variance), and the model parameters at the model's $l$-th layer at step $t$; $\beta_1$ and $\beta_2$ are the decaying factor; $\gamma$ is the learning rate; $\eta$ is an additive constant to avoid division by 0; $\clip(x,a,b):=\min\{ \max\{x,a\},b\}$ is the clipping operation\footnote{In the LAMB paper the clip function is only applied to $\|\x_{t-1}^\tl\|$ without mentioning the exact clipping function configurations, and our experiments show that $\|\x_{t-1}^\tl\|$ varies a lot among different layers. Thus we apply the clipping function to the whole ratio, which is more stable among different layers. With this clipping function we are able to achieve similar SQuAD accuracy compared to the original LAMB.}; $c_t^\tl$ is a layer-wise scaling factor that regulates the update of $\x_t^\tl$ into certain range. One thing to note is that within each layer, each tensor (e.g., weight and bias) will have its own scaling coefficient $c_t^\tl$. The underlying intuition of LAMB's scaling coefficient is that when the update is relatively large compared to the parameter, we should apply a lower learning rate to that layer (and vice versa).

% Motivated by the {1-bit Adam} and LAMB works, in the next section we investigate whether communication compression could be beneficial to large-batch training with LAMB, and whether the insights behind {1-bit Adam} can be applied to LAMB to achieve {\textbf{``1-bit LAMB''}}: communication efficient large-scale large-batch training with LAMB's convergence speed.

\section{Motivation and Insights}
\label{sec:moti}

\subsection{1-bit Adam is not sufficient for large-batch distributed training}
\label{sec:moti_1bitadam}

1-bit Adam demonstrates the same convergence speed as Adam for BERT pre-training task with batch size 4K~\citep{tang20211bit}. On the other hand, the LAMB work shows that it is difficult to keep Adam's convergence speed at batch sizes 16K or larger for BERT pre-training~\citep{lamb}. To find out whether 1-bit Adam is sufficient for large-batch distributed training, we perform a similar experiment using BERT pre-training task at batch size 16K. Using~\citet{lamb}'s training parameters and tuning procedure (details in Appendix~\ref{sec:appendix_1bitadam}) for LAMB and Adam, we perform BERT pre-training with LAMB and 1-bit Adam, respectively. Then we use the two pre-trained BERT model to perform SQuAD 1.1 fine-tuning (details in Section~\ref{sec:eval}). Results in Table~\ref{table:squad_1bitadam} show that similar to Adam, 1-bit Adam has slower convergence speed compared to LAMB at larger batch size.

\begin{table}[t]
  \footnotesize
%   \small
  \caption{BERT-Large pre-training (batch size 16K) final validation loss, and SQuAD average/max dev set F1 scores over 32 runs using the pre-trained BERT models. The first two columns are from the original LAMB work. The last two columns are our experiments using the same training parameters.}\label{table:squad_1bitadam}
  \centering
  \begin{tabular}{l|llll}
  \hline
  BERT pre-training optimizer & LAMB~\citep{lamb} & Adam~\citep{lamb} & LAMB & 1-bit Adam \\
  \hline
  BERT validation loss & $-$ & $-$ & 1.362 & 1.504 \\
  SQuAD Avg. F1 & $-$ & $-$ & 90.716 & 89.225 \\
  SQuAD Max F1 & 91.345 & 88.540 & 91.119 & 89.494 \\
  \hline
  \end{tabular}\vspace{-0.2cm}
\end{table}

\subsection{Communication is still an overhead for large-batch distributed training}
\label{sec:moti_comm_overhead}
\citet{tang20211bit} demonstrate that when pre-training BERT with Adam at batch sizes 64 to 4K, the communication may take up to 94\% and 75\% of total training time per step on clusters with Ethernet and InfiniBand networks. To investigate whether the communication overhead still exists, we conduct a similar profiling experiments but with LAMB optimizer and batch sizes 8K to 64K. We evaluate two clusters: one with 4 NVIDIA V100 GPUs per node and 40 Gigabit Ethernet inter-node network (4.1 Gbps effective bandwidth); the other one with 8 V100 GPUs per node and 100 Gigabit InfiniBand EDR inter-node network (close to theoretical peak effective bandwidth). Results show that even with larger batch sizes the communication still contributes up to 91\% and 52\% of the training time on two clusters (details in Appendix~\ref{sec:appendix_comm_overhead}), indicating the opportunities to improve large-batch distributed training efficiency by communication compression.

\subsection{Investigating BERT pre-training with baseline LAMB}
\label{sec:moti_lamb}
\citet{tang20211bit} find that Adam's variance term becomes stable at an early stage of training (after around 15\% of total training for BERT), which is why {1-bit Adam} can ``freeze'' the variance after the warmup stage and use it as a fixed precondition during compression. LAMB also has the variance term as a non-linearly gradient dependency, and LAMB's scaling coefficient ($c_t^\tl$ in \eqref{alg:lamb}) depends on the variance. Thus we investigate how LAMB's scaling coefficient and variance change during training using BERT pre-training task.

Figure~\ref{fig:lamb_coeff_var} presents LAMB's scaling coefficients and variance norms for different layers in the first BertLayer (other layers in the model have similar patterns, details in Appendix~\ref{sec:appendix_lamb}). Results demonstrate that the scaling coefficients keep increasing until reaching plateaus, because the update tends to become smaller compared to the parameter during training. In addition, many scaling coefficients become stable at an early stage. Results also demonstrate that LAMB provides adaptive learning rate in two folds: 1) different layers may reach scaling coefficient plateaus at different time; 2) different layers may reach different scaling coefficient plateaus. We believe that this is one of the reasons why LAMB can provide better convergence (or reach the same convergence with less hyperparameter tuning) at larger batch sizes compared with Adam. On the other hand, LAMB's varaince terms are less stable compared with Adam: many layers have their variance norms constantly decreasing during the whole training, up to two orders of magnitude difference. As we see in next section this makes {1-bit Adam}'s strategy affect convergence when directly applied to LAMB.

\begin{figure}[t]
\centering
\subfigure[scaling coefficients]{
\begin{minipage}[t]{0.31\linewidth}
\centering
\includegraphics[width=1\textwidth]{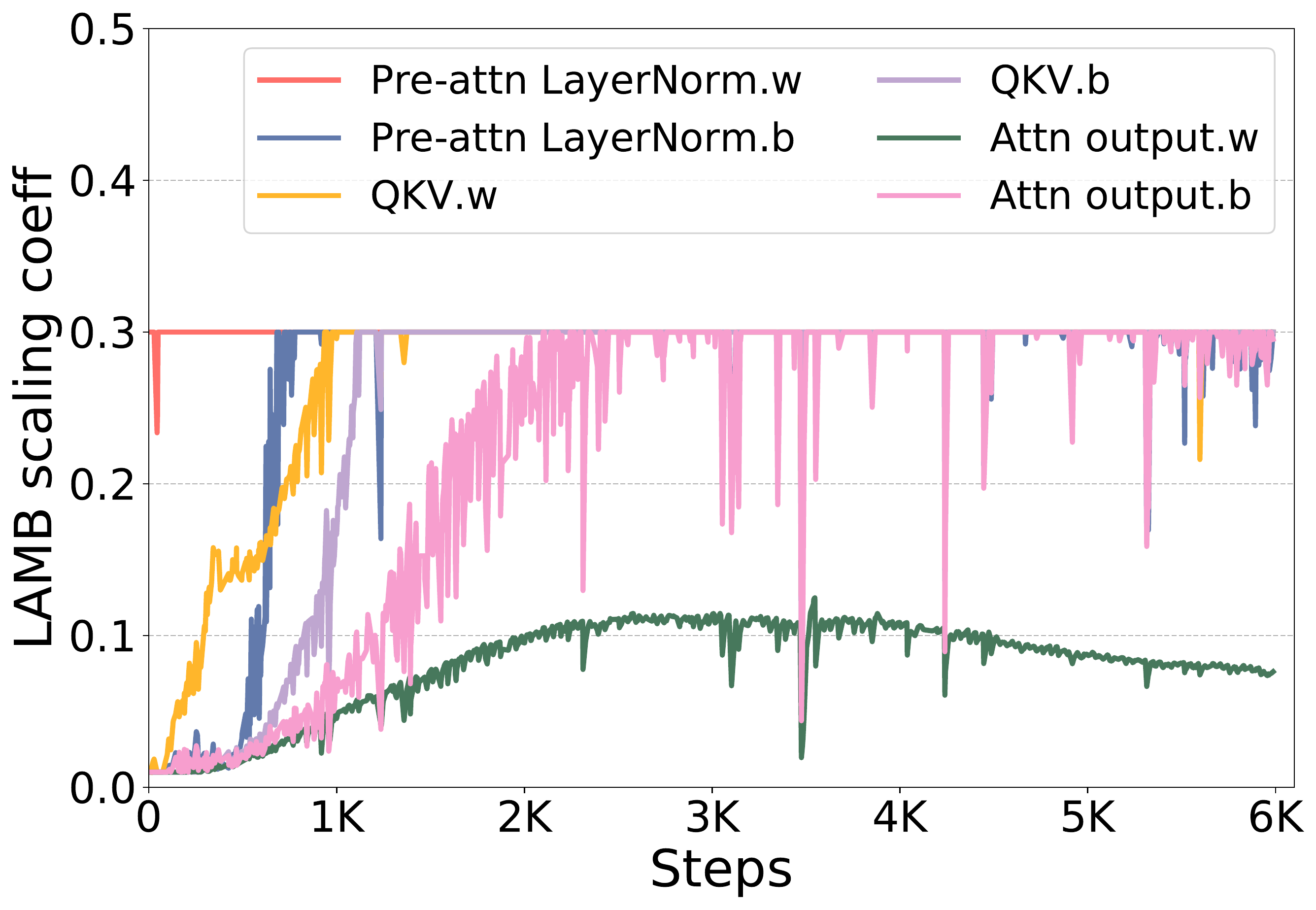}\label{fig:lamb_coeff_var_1}
\end{minipage}}\quad
\subfigure[variance norms]{
\begin{minipage}[t]{0.31\linewidth}
\centering
\includegraphics[width=1\textwidth]{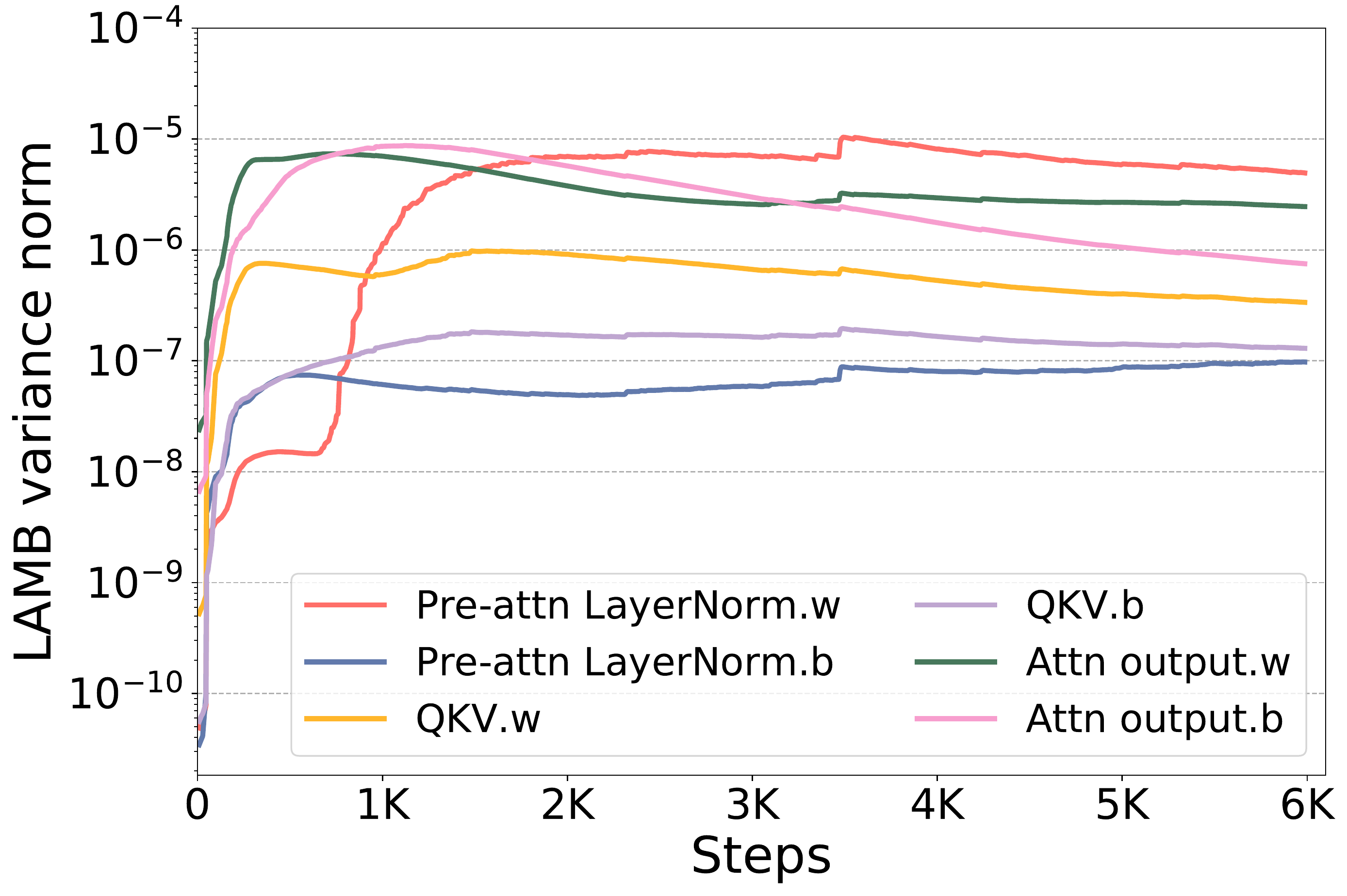}\label{fig:lamb_coeff_var_2}
\end{minipage}}
\caption{LAMB's scaling coefficients ($c_t^\tl$ in \eqref{alg:lamb}) and variance norms for different layers in the first BertLayer during BERT-Large pre-training seqlen 128 (5993 steps in total). We set the lower/upper bound of the scaling coefficient ($c_{min}$ and $c_{max}$ in \eqref{alg:lamb}) at 0.01 and 0.3.}
\label{fig:lamb_coeff_var}\vspace{-0.2cm}
\end{figure}

\subsection{Existing compression strategy affects LAMB's convergence}
\label{sec:moti_lamb_freeze}
To combine 1-bit compression and large-batch training's communication efficiency, first we attempt to directly apply {1-bit Adam}'s strategy to LAMB. We design and evaluate an experimental two-stage algorithm ``LAMB + basic 1-bit'': At the warmup stage we use vanilla LAMB. At the compression stage we stop updating the variance and LAMB's scaling coefficient and use them as precondition, and communicate based on 1-bit compressed momentum (details about this algorithm design in Appendix~\ref{sec:appendix_lamb_freeze}). For simplicity we only apply this experimental algorithm to BERT pre-training seqlen 128 phase, and still only use vanilla LAMB in seqlen 512 phase. 

Table~\ref{table:squad} presents the BERT pre-training final validation loss when using vanilla LAMB, the experimental algorithm described in this section, and the proposed {\OA} algorithm (Section~\ref{sec:algo}). We also use the pre-trained models to fine-tune SQuAD 1.1 and present the F1 scores. Results show that simply freezing both variance and LAMB's scaling coefficient would affect the convergence speed and lead to lower SQuAD scores (and lower GLUE scores in Section~\ref{sec:eval}). This is because LAMB's variance term is less stable as demonstrated in our study. On the other hand, the proposed {\OA} is able to provide the same convergence speed as vanilla LAMB, and next we will describe its algorithm design.

\begin{table}[t]
  \footnotesize
%   \small
  \caption{BERT-Large pre-training (batch size 64K/32K at seqlen 128/512) final validation loss, and SQuAD average/max dev set F1 scores over 32 runs using the pre-trained models. ``LAMB + basic 1-bit'' is the experimental algorithm described in Section~\ref{sec:moti_lamb_freeze}. `` {\OA}'' is the proposed work described in Section~\ref{sec:algo}.}\label{table:squad}
  \centering
  \begin{tabular}{lllll}
  \hline
  BERT pre-training optimizer & LAMB~\citep{lamb} & LAMB (ours) & LAMB + basic 1-bit & {\OA}\\
  \hline
  BERT validation loss & $-$ & 1.451 & 1.494 & \textbf{1.443}\\
  SQuAD Avg. F1 & $-$ & 90.265 & 90.069 & \textbf{90.524} \\
  SQuAD Max F1 & 90.584 & 90.555 & 90.409 & \textbf{90.788} \\
  \hline
  \end{tabular}\vspace{-0.05cm}
\end{table}

\section{{\OA} Algorithm}
\label{sec:algo}
The proposed {\OA} optimizer introduces a novel way to update the adaptive layerwise learning rate during the compression stage. There are two major differences between {\OA} and the original LAMB: 1) During compression stage, {\OA} updates the layerwise learning rate based on a novel ``reconstructed gradient'' based on the compressed momentum. This makes {\OA} compatible with error compensation and be able to keep track of the training dynamic under compression. 2) {\OA} also introduces extra stabilized \textit{soft} thresholds when updating layerwise learning rate at compression stage, which makes training more stable under compression.

\paragraph*{Problem setting} In this paper, we focus on the following optimization task:
\vspace{-0.3cm}
\begin{equation}
\min_{\bm{x}\in\mathcal{R}^d}\quad f(\bm{x}) = \frac{1}{n} \sum_{i=1}^n \underbrace{\mathbb{E}_{\bm{\xi}^{(i)}\sim\mathcal{D}_i}F(\bm{x}; \bm{\bm{\xi}}^{(i)})}_{:=f_i(\x)},\label{eq:main}
\end{equation}
where $d$ is the dimension of the input model $\x$, $\mathcal{D}_i$ is the data distribution of individual data sample $\bm{\xi}^{(i)}$ on the $i$-th worker, $F(\x;\bm{\xi})$ is the loss function.

\paragraph{Notations and definitions}
Throughout this paper, we use the following notations:
\begin{itemize}
\item $\nabla f(\cdot)$ denotes the gradient of a function $f$.
% \item $f^{*}$ denotes the optimal value of the minimization problem \eqref{eq:main}.
\item $f_i(\x) := \mathbb{E}_{\bm{\xi}\sim\mathcal{D}_i}F(\x; \bm{\xi})$.
\item $\|\cdot\|_p$ denotes the $l_p$-norm for vectors and  matrices. Notice  $p=\infty$ means the infinity norm.
% \item $\|X\|_A:=\text{Tr}(X^{\top}AX)$.
\item $\bm{C}_{\omega}(\cdot)$ denotes the randomized compressing operator, where $\omega$ denotes the random variable. One example is the randomized quantization operator, for example, $\bm{C}_{\omega}(0.7) = 1$ with probability $0.7$ and $\bm{C}_{\omega}(0.7) = 0$ with probability $0.3$. 
% It is also worth noting that this notion $\bm{C}_{\omega}(\cdot)$ also covers the deterministic scenario, for example, $\bm{C}_{\omega}(\cdot)$ is a one bit compression operator.
\item { $\sqrt{\cdot}$ denotes the square root of the argument. In this paper if the argument is a vector, then it returns a vector taking the element-wise square root.}
\item $(\x)^2$ denotes the element-wise square operation if $\x$ is a vector.
\item $\frac{\bm{a}}{\bm{b}}$ or $\bm{a}/\bm{b}$ denotes the element-wise division operation if both $\bm{a}$ and $\bm{b}$ are vectors and their dimension matches.
% \item { $\oslash$ denotes the element-wise divide operator, that is, the $i$th element of $\bm{m} \oslash \bm{v}$ is $\bm{m}_i / \bm{v}_i$.}
% \item $\odot$ denotes the element-wise multiply operator, that is, the $i$th element of $\bm{m} \odot \bm{v}$ is $\bm{m}_i * \bm{v}_i$.
\end{itemize}

\begin{algorithm}[ht!]\caption{{\OA}}
\begin{algorithmic}[1]
\footnotesize
\STATE {\bfseries Initialize}: $\x_0^\tl$, $\m_0^\tl = \boldsymbol{0}$, $\v_0^\tl = \boldsymbol{0}$, $c_{avg}^\tl = 0$ for each layer. Learning rate $\gamma$,  initial error $\bdelta = \boldsymbol{0}$, number of total iterations $T$,  warm-up steps $T_{w}$, three decaying factor $\beta_1$, $\beta_2$, $\beta_3$ for LAMB's momentum, variance, and scaling coefficient. $r^\tl = 1$, $r_{min}$, $r_{max}$, $r_{threshold}$ for {\OA}.

\STATE Running the original LAMB in \eqref{alg:lamb} for $T_{w}$ steps, and at each step $c_{avg}^\tl =  \beta_3c_{avg}^\tl + (1 - \beta_3)c_t^\tl$.
\STATE At the end of step $T_{w}$, for each layer store the variance term (defined as $\v_t^\tl$ in \eqref{alg:lamb}) $\bm{v}_{_{ T_w}}^\tl$ while still keep updating $\v_t^\tl$ in the future steps. Also stop updating $c_{avg}^\tl$.\label{line:1bitlamb_duplicate}
% \INDSTATE Create a fused momentum $\m$ that flatten all parameters' momentum into a contiguous 1D buffer.\label{line:momentum_fusion}
% \INDSTATE For each parameter $\textit{scale}_i = \frac{\|\m_{_{ T_w}}^\tl\|}{\sqrt{\textit{numel}(m_{_{ T_w}}^\tl)}}$, $\textit{factor}_{_{ T_w}}^\tl = 1$.\label{line:momentum_scale_start} {\todo{TODO: how to represent torch.numel in math notation?}}
% \INDSTATE $\textit{scale}_{avg} = \frac{1}{n}\sum_{i=1}^n \textit{scale}_i$. Here $n$ is number of parameters.
% \INDSTATE For each parameter $\textit{scale}_i = \frac{\textit{scale}_{avg}}{\textit{scale}_i}$.\label{line:momentum_scale_end} 
\FOR {$t=T_w,\ldots,T$}

\STATE \textbf{(On $i$-th node)}
% \STATE Create a temporary duplicate of fused momentum $\m_{t-1}$.
% \FOR {the $l$-th layer}
\STATE Randomly sample $\bm{\xi}_t^{(i)}$  and compute local stochastic gradient $\g_t^{(i)} := \nabla F_i(\x_t^{(i)}, \bm{\xi}_t^{(i)})$, and update the local momentum $\m_t^{(i)}$ according to $\m_t^{(i)} =  \beta_1\bm{m}_{t-1}^{(i)} + (1 - \beta_1)\g_t^{(i)}.$
% \STATE $\m_t^{(ij)} = \m_t^{(ij)} \times \textit{scale}_i$.\label{line:momentum_scale_multi}
% \ENDFOR
\STATE Compress the fused momentum $\m_t^{(i)}$ into $\hat{\m}_t^{(i)} = \bm{C}_\omega\left[\m_t^{(i)} + \bdelta_{t-1}^{(i)}\right]$, and update the compression error by
$\bdelta_t^{(i)} = \m_t^{(i)} + \bdelta_{t-1}^{(i)} - \hat{\m}_t^{(i)}$.\label{line:allreduce_start}
\STATE Send the  $\hat{\m}_t^{(i)}$ to the server.
\STATE \textbf{(On server)}
\STATE Take the average over all $\hat{\m}_t^{(i)}$ it receives and compress it into
$
\overline{\m}_t =\bm{C}_\omega\left[ \frac{1}{n}\sum_{j=1}^n\hat{\m}_t^{(i)} + \overline{\bdelta}_{t-1}\right],
$
and update the compression error accordingly by $\overline{\bdelta}_t = \frac{1}{n}\sum_{j=1}^n\hat{\m}_t^{(i)} + \overline{\bdelta}_{t-1} - \overline{\m}_t$.
\STATE Send $\overline{\m}_t$ to all the workers.
\STATE \textbf{(On $j$-th node)}
\STATE Set  $\m_t = \overline{\m}_t$.\label{line:allreduce_end}
\FOR {the $l$-th layer}
% \STATE $\m_t^\tl = \m_t^\tl/\textit{scale}_i$.\label{line:momentum_scale_divide}
\STATE Reconstruct global gradient $\g_t^\tl = (\m_t^\tl - \beta_1\m_{t-1}^\tl)/(1 - \beta_1)$.\label{line:1bitlamb_reconstruct1}
\STATE $\v_t^\tl = \beta_2\v_{t-1}^\tl + (1-\beta_2)\left(\g_t^\tl\right)^2$.\label{line:1bitlamb_reconstruct2}
\STATE $r_t^\tl = \left\|\bm{v}_{_{\tiny T_w}}^\tl/\v_t^\tl\right\|_{\infty}$.\label{line:1bitlamb_ratio} 
\STATE $r_t^\tl = \clip\left(r_t^\tl, (1-r_{threshold})\times r_{t-1}^\tl, (1+r_{threshold})\times r_{t-1}^\tl\right)$. \label{line:1bitlamb_clip1}
\STATE $r_t^\tl = \clip\left(r_t^\tl, r_{min}, r_{max}\right)$. \label{line:1bitlamb_clip2}
\STATE $c_t^\tl = r_t^\tl c_{avg}^\tl$. \label{line:1bitlamb_coeff}
\STATE Update model of the $l$-th layer $\x_t^\tl = \x_{t-1}^\tl - \gamma c_t^\tl\frac{\m_t^\tl}{\sqrt{\bm{v}_{_{\tiny T_w}}^\tl}}$.\label{line:1bitlamb_update}

\ENDFOR
\ENDFOR
\STATE {\bfseries Output}: $\x$.
\end{algorithmic}\label{alg:1bitlamb}
\end{algorithm}

We summarize {\OA} in Algorithm~\ref{alg:1bitlamb}. During the compression stage, we freeze the variance in order to apply the error compensation mechanism correctly~\citep{tang20211bit}. However, this brings two challenges for LAMB's case: 1) We cannot update LAMB's scaling coefficients ($c_t^\tl$ in \eqref{alg:lamb}) during compression stage based on LAMB algorithm, because it requires uncompressed momentum and up-to-date variance; 2) In vanilla LAMB the scaling coefficients become stable during training. However, because LAMB's variance term is not stable for some layers but we freeze them during compression, we need to adjust scaling coefficients during compression to compensate this.

To this end, {\OA} uses a novel way to adaptively update LAMB scaling coefficients during the compression stage to compensate the difference between frozen and actual variance. During the warmup stage, we use vallia LAMB and keep track of the moving average of each layer's scaling coefficient (used during the compression stage because the scaling coefficient is not stable at beginning). At the end of warmup, we stop updating the moving average, and store the frozen variance to be used during compression stage. On the other hand, we still keep updating another ``fresh'' variance by reconstructing the global gradient based on this and last step's compressed momentum.

To update LAMB scaling coefficient during compression stage, we compute a {\OA} scaling ratio which is the max element among the (\textit{frozen variance}/\textit{fresh variance}). We use the max element as the scaling ratio because it is effective and cheap to compute based on our experiments. To avoid extreme ratios and dramatic change between ratios, we use two kinds of clipping configurable by the user. (The clipping from vanilla LAMB algorithm are not used during the compression stage.) Then we compute this step's LAMB scaling coefficient using the {\OA} scaling ratio and the moving average at the end of warmup, and use it to update the model. Our study shows that those layers with less stable variance tend to have more dynamic {\OA} scaling ratios, which demonstrate the desired adaptiveness (details in Appendix~\ref{sec:appendix_algo}). Appendix~\ref{sec:appendix_proof} provides a theoretical
analysis for {\OA}.

\section{Proposed System Design}
\label{sec:implementation}
% include algo
To realize the compressed communication at system level, \citet{tang20211bit} designed a custom collective primitive, called ``compressed allreduce'', using Message Passing Interface (MPI). In addition to implementing {\OA} using this MPI-based backend, we introduce a new system implementation for compressed communication using the NCCL backend of PyTorch distributed, following the 3-phase design in the MPI-based backend: 1) The gather step, where each worker sends its $i$-th chunk to worker $i$, is implemented using NCCL's Alltoall and AllGather. 2) The average step, where each worker averages all chunks it receives. 3) The scatter step, where each worker receives the average of all $i$-th chunks from worker $i$, is implemented using NCCL's AllGather. Compared to the MPI-based, this new NCCL-based implementation significantly improves the usability since NCCL is integrated with PyTorch distributed. In addition, evaluations show that the performance of the NCCL-based implementation is better than the MPI-based for Ethernet-based systems and on-par for InfiniBand-based systems. Thus we will mainly present the results with NCCL-based backend, but we also include a comparison between MPI and NCCL-based implementations in Appendix~\ref{sec:appendix_backend}.

\section{Evaluation}
\label{sec:eval}
\paragraph{Dataset and models}
We evaluate the convergence and performance of {\OA} and uncompressed LAMB for BERT-Large ($L=24$, $H=1024$, $A=16$, $340M$ params) pre-training task. We use the same dataset as \citet{bert}, which is a concatenation of Wikipedia and BooksCorpus with 2.5B and 800M words respectively. Compared to the original BERT model, one notable change is that we applied PreLN instead of PostLN for better training stability~\citep{NEURIPS2020_a1140a3d,pmlr-v119-xiong20b}. We use the GLUE fine-tuning benchmark~\citep{glue} and SQuAD 1.1 fine-tuning task\footnote{https://rajpurkar.github.io/SQuAD-explorer/} to evaluate the convergence of the BERT models trained by LAMB and {\OA}.

\paragraph{Hardware}
We use the two clusters described in Section~\ref{sec:moti_comm_overhead}. We use 8 to 256 GPUs for BERT pre-training tasks to measure {\OA}'s performance gain. For fine-tuning tasks we use 4 GPUs. One thing to note is that because {\OA} introduces additional memory overhead (one persistent copy of the fresh varaince and one temporary copy of last step's momentum) and because the V100 GPUs on the Ethernet cluster have 16GB memory instead of 32GB, we were not able to fit large batch sizes for BERT pre-training when the number of GPUs is small. For seqlen 128 and 512, we need to use at least 8 and 16 GPUs on the Ethernet cluster, respectively.

\paragraph{Training parameters}
For BERT pre-training, we set the parameters in \eqref{alg:lamb} as $\beta_1 = 0.9$, $\beta_2 = 0.999$, $c_{min} = 0.01$ and $c_{max} = 0.3$ for LAMB and {\OA}. For {\OA}, we set $\beta_3 = 0.9$, $r_{min} = 0.5$, $r_{max} = 4.0$, and $r_{threshold} = 0.1$ in Algorithm~\ref{alg:1bitlamb}. For convergence analysis, we set total batch size as 64K for seqlen 128 and 32K for seqlen 512. For performance analysis, we test different batch sizes from 8K to 64K.

For BERT pre-training seqlen 128, the learning rate starts from $1\times 10^{-3}$, exponentially increases to $12\times 10^{-3}$ as a warmup in the first 450 steps, then decays into 0.9 of the original after every 250 steps. The total number of steps is 5993. For {\OA} we use the first 1000 steps (16.7\%) as the warmup stage. For BERT pre-training seqlen 512, the learning rate starts from $0$, exponentially increases to $2\times 10^{-3}$ as a warmup in the first 150 steps, then decays into 0.9 of the original after every 150 steps. The total number of steps is 555. For {\OA} we use the first 107 steps (19.3\%) as the warmup stage.

For {\OA}'s hyperparameter tuning, we find that the only parameter that requires nontrivial tuning is the number of warmup steps, which depends on when do the LAMB's scaling coefficients become stable as discussed in Section~\ref{sec:moti_lamb} thus could potentially be automated. For the other parameters ($\beta_3$, $r_{min}$, $r_{max}$, and $r_{threshold}$ in Algorithm~\ref{alg:1bitlamb}) we find tuning them does not lead to very different convergence since they are just used to avoid extreme value/dramatic change of LAMB's scaling coefficients. Similarly the clipping parameters $c_{min}$ and $c_{max}$ in \eqref{alg:lamb} for LAMB and {\OA} do not require non-trivial tuning.

For GLUE benchmarks we use Adam optimizer and perform single-task training on the dev set. Following the setup in the BERT paper~\citep{bert}, we use a batch size of 32 and fine-tune for 3 epochs for all GLUE tasks. For each task, we select the best learning rate among $\{2\times 10^{-5},3\times 10^{-5},4\times 10^{-5},5\times 10^{-5}\}$. For SQuAD fine-tuning we use Adam optimizer and the same parameters as published by HuggingFace (batch size = $24$, learning rate = $3\times 10^{-5}$, dropout = $0.1$, 2 epochs).

\paragraph{Convergence analysis}
Figure~\ref{fig:eval_loss} presents the BERT pre-training sample-wise convergence results. For both seqlen 128 and 512, {\OA} provides the same convergence speed as LAMB, while it takes much less time to process each batch due to communication compression. We already presented the BERT pre-training validation loss and SQuAD fine-tuning results in Section~\ref{sec:moti_lamb_freeze} Table~\ref{table:squad}. And Table~\ref{table:glue} presents the GLUE results. For both SQuAD and GLUE, results show that {\OA} provides the same fine-tuning task accuracy as LAMB, while simply freezing both the varaince and LAMB's scaling coefficients would hurt the accuracy.

\begin{figure}[t]
\centering
\includegraphics[width=0.4\textwidth]{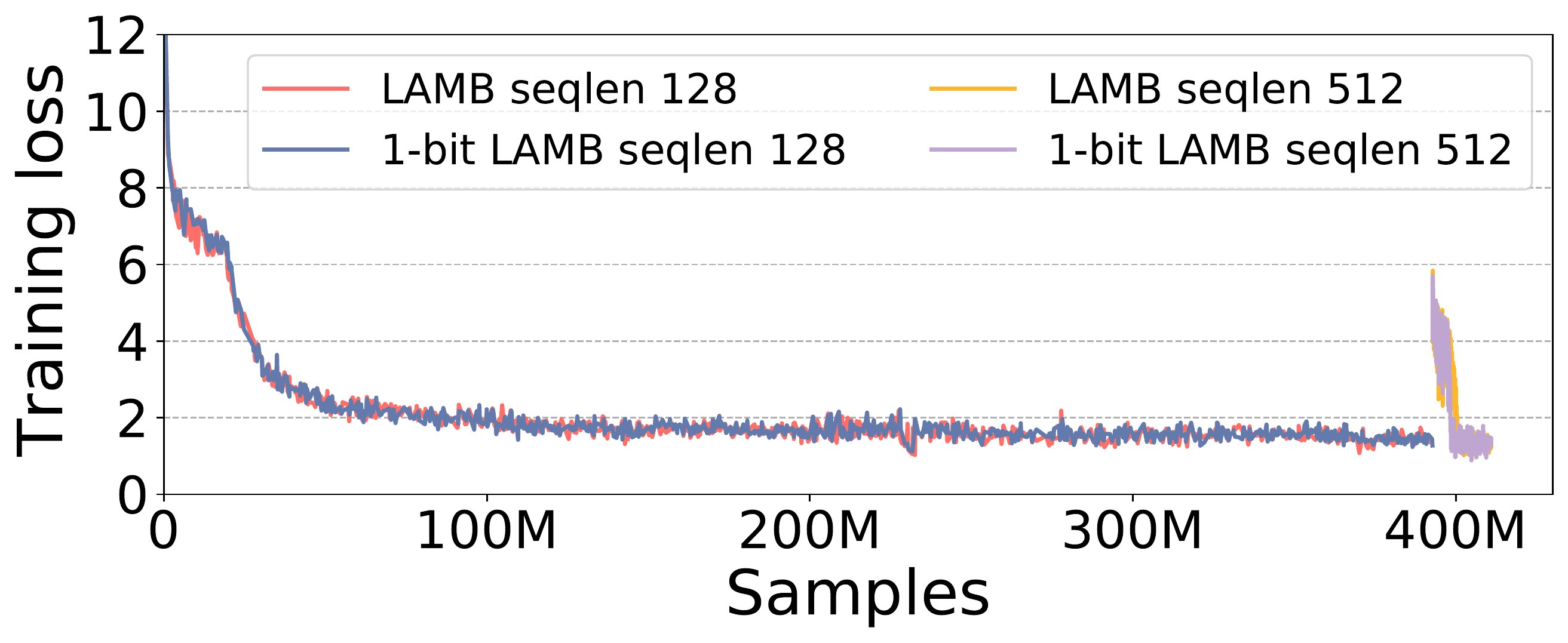}
\caption{Sample-wise convergence speed for BERT-Large pre-training with LAMB and {\OA}.}\label{fig:eval_loss}\vspace{-0.2cm}
\end{figure}

\begin{table*}[t]
\footnotesize
% \small
  \centering
  \caption{GLUE development set results using the pre-trained BERT-Large models. ``Original'' results are from \citet{bert} using BertAdam. ``LAMB'' results use the uncompressed LAMB for BERT pre-training. ``LAMB + basic 1-bit'' is the experimental algorithm described in Section~\ref{sec:moti_lamb_freeze}. `` {\OA}'' is the proposed work. The latter 3 cases use the same shared training parameters during pre-training and fine-tuning. Spearman correlations are reported for STS-B, and accuracy scores are reported for the other tasks. Each task's scores are the median scores over 32 runs.}\label{table:glue}
  \begin{tabular}{lccccccccc}
  \hline  
  & MNLI-(m/mm)& QQP& QNLI& SST-2& CoLA&STS-B& MRPC & RTE&\textbf{Average}  \\
  \hline  
  Original &86.7/85.9& 89.3& 92.7& 94.9& 60.5& 86.5& 85.4& 70.1& 83.6\\
  LAMB & 85.4/85.5& 91.4& 91.9& 92.8& 60.9& 90.1& 86.9& 70.4& \textbf{83.9}\\
  LAMB + basic 1-bit & 84.8/84.7& 91.2& 91.4& 92.4& 54.0& 89.6& 84.2& 66.6& 82.1\\
  {\OA} & 85.5/85.6& 91.3& 92.3& 93.1& 59.2& 90.0& 86.5& 71.5& \textbf{83.9}\\
  \hline
  \end{tabular}\vspace{-0.1cm}
\end{table*}

\paragraph{Performance analysis}
Computed as 1/(warmup\_ratio + (1 - warmup\_ratio)/16) for FP16 training, {\OA} offers $4.6\times$ and $4.1\times$ end-to-end communication volume reduction for BERT-Large seqlen 128 and 512, respectively. To measure the actual end-to-end performance gain, first we perform a throughput analysis where we run the warmup (i.e., baseline LAMB's performance) and compression stage of {\OA} for 200 steps each, and measure the average throughput of the two stages. Figure~\ref{fig:1bitlamb_berteval} presents the results with NCCL-based compressed communication backend under different batch sizes and number of GPUs. For seqlen 128, {\OA} provides up to $4.5\times$ speedup during the compression stage, which is equivalent to $2.8\times$ end-to-end speedup (computed as 1/(warmup\_ratio + (1 - warmup\_ratio)/compression\_stage\_speedup)). For seqlen 512, {\OA} provides up to $3.0\times$ speedup during the compression stage, which is equivalent to $2.2\times$ end-to-end speedup. This demonstrates {\OA}'s better scalability compared to LAMB. It is also worth mentioning that {\OA} on Ethernet (4.1 Gbps effective bandwidth, 4 GPUs per node) is able to achieve comparable throughput as LAMB on InfiniBand (near 100 Gbps effective bandwidth, 8 GPUs per node), which demonstrates {\OA}'s efficiency considering the hardware differences.

In addition to throughput analysis, we also measure the total runtime of pre-training at batch size $64/32K$ for both {\OA} and LAMB. As shown in Table~\ref{table:runtime}, overall {\OA} is able to provide $2.2\times$ and $1.5\times$ speedup for seqlen 128 and 512. These numbers are consistent with the end-to-end speedup calculated in the throughput analysis ($2.0\times$ and $1.5\times$ based on results in Figure~\ref{fig:1bitlamb_berteval_4} and~\ref{fig:1bitlamb_berteval_7}). For seqlen 128, the end-to-end speedup based on runtime is slightly higher than the speedup based on throughput. We find that it is because uncompressed LAMB's larger communication volume makes it more sensitive to the occasional fluctuation of the actual network bandwidth.

\begin{table*}[t]
\footnotesize
% \small
  \centering
  \caption{Total runtime of BERT-Large pre-training with LAMB and {\OA} (256 GPUs with Ethernet connections. Batch sizes 64K/32K for seqlen 128/512.)}\label{table:runtime}
  \begin{tabular}{lccc}
  \hline  
  & Seqlen 128& Seqlen 512& Total  \\
  \hline  
  LAMB & 657 min & 74 min & 731 min\\
  {\OA} & 301 min (2.2x) & 50 min (1.5x) & 351 min (2.1x)\\
  \hline
  \end{tabular}\vspace{-0.2cm}
\end{table*}

\begin{figure}[t]
\centering
\subfigure[seqlen 128, batch size 8K]{
\begin{minipage}[t]{0.22\linewidth}
\centering
\includegraphics[width=1\textwidth]{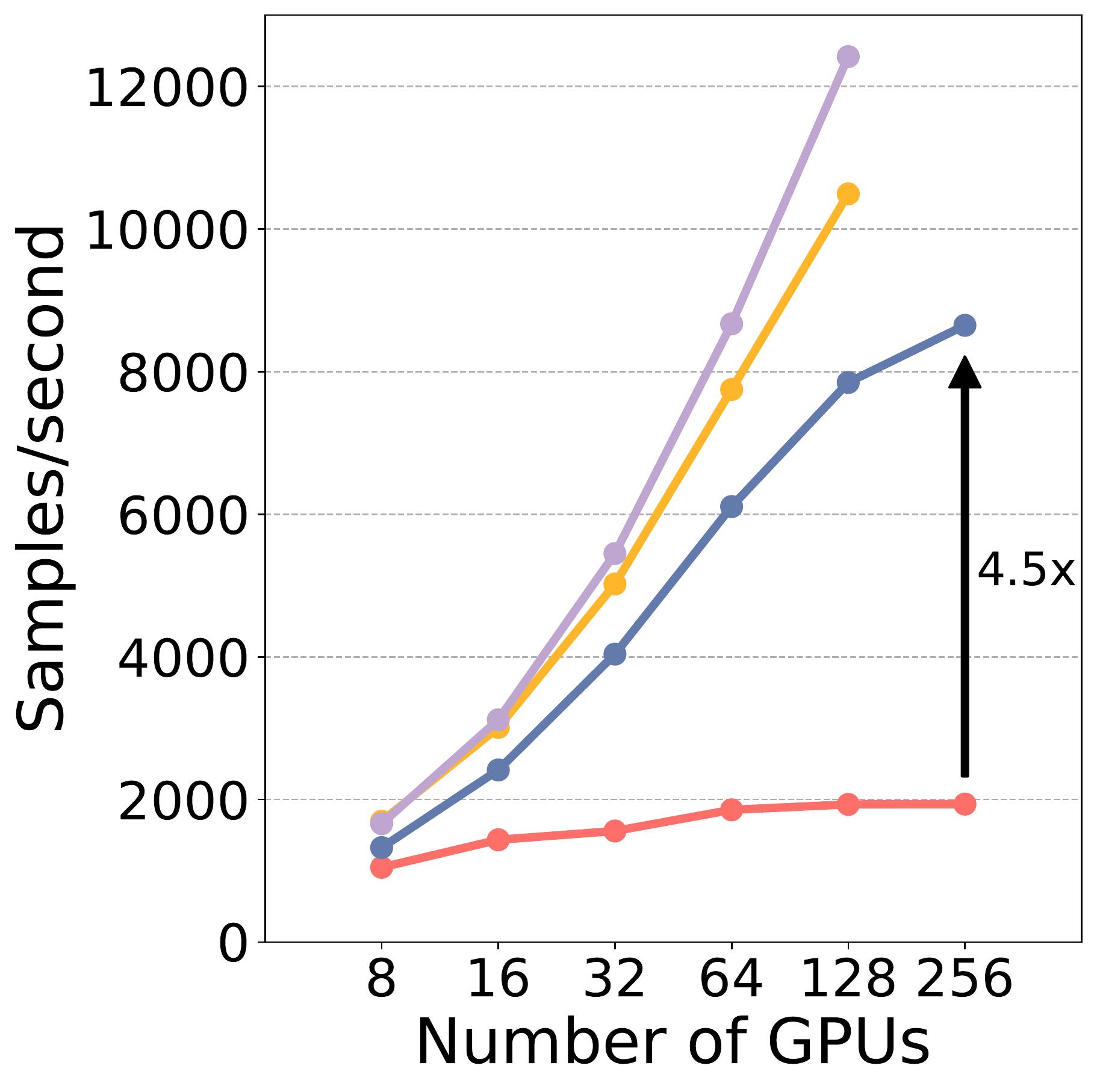}\label{fig:1bitlamb_berteval_1}
\end{minipage}}\quad
\subfigure[seqlen 128, batch size 16K]{
\begin{minipage}[t]{0.22\linewidth}
\centering
\includegraphics[width=1\textwidth]{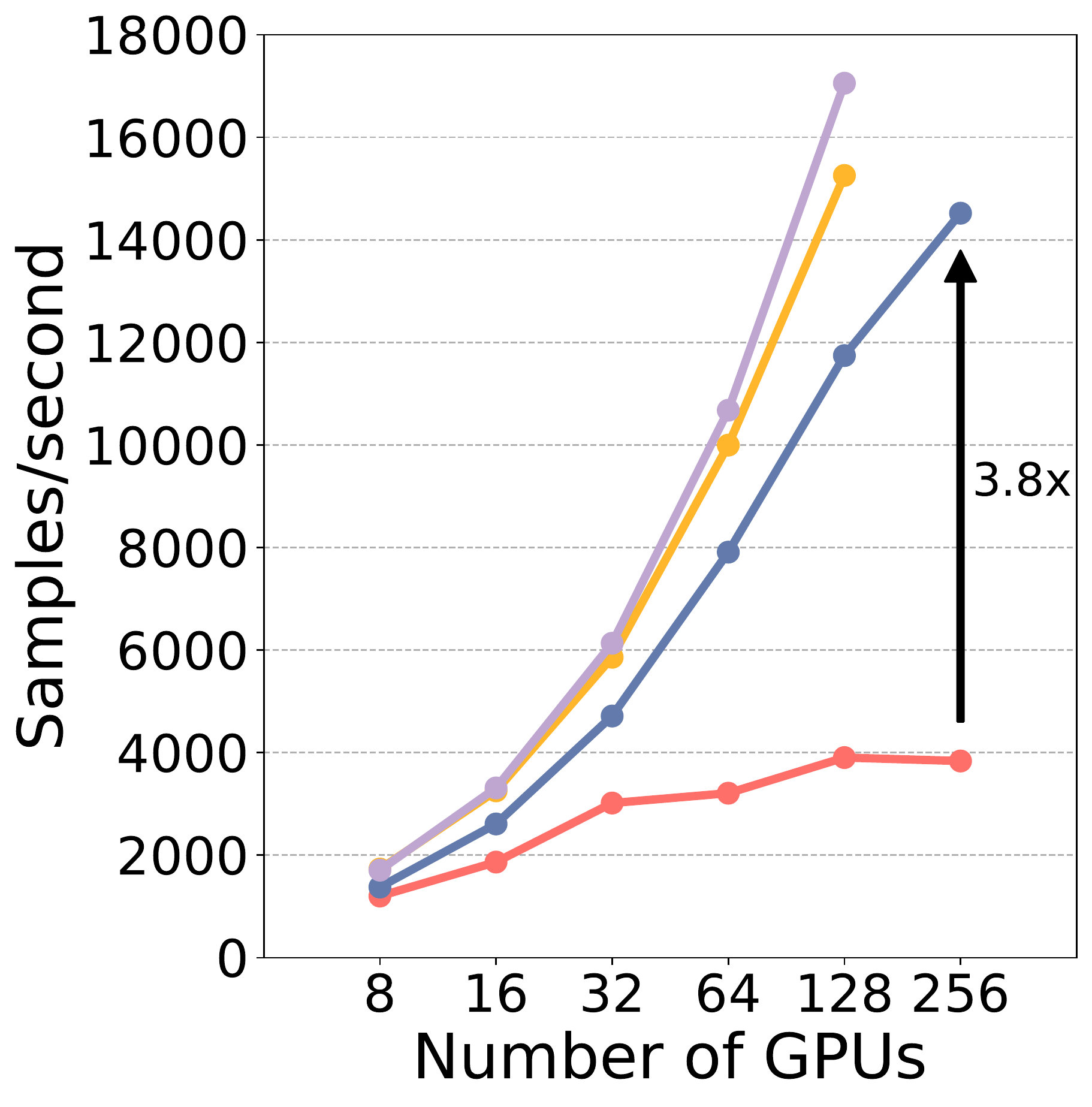}\label{fig:1bitlamb_berteval_2}
\end{minipage}}\quad
\subfigure[seqlen 128, batch size 32K]{
\begin{minipage}[t]{0.22\linewidth}
\centering
\includegraphics[width=1\textwidth]{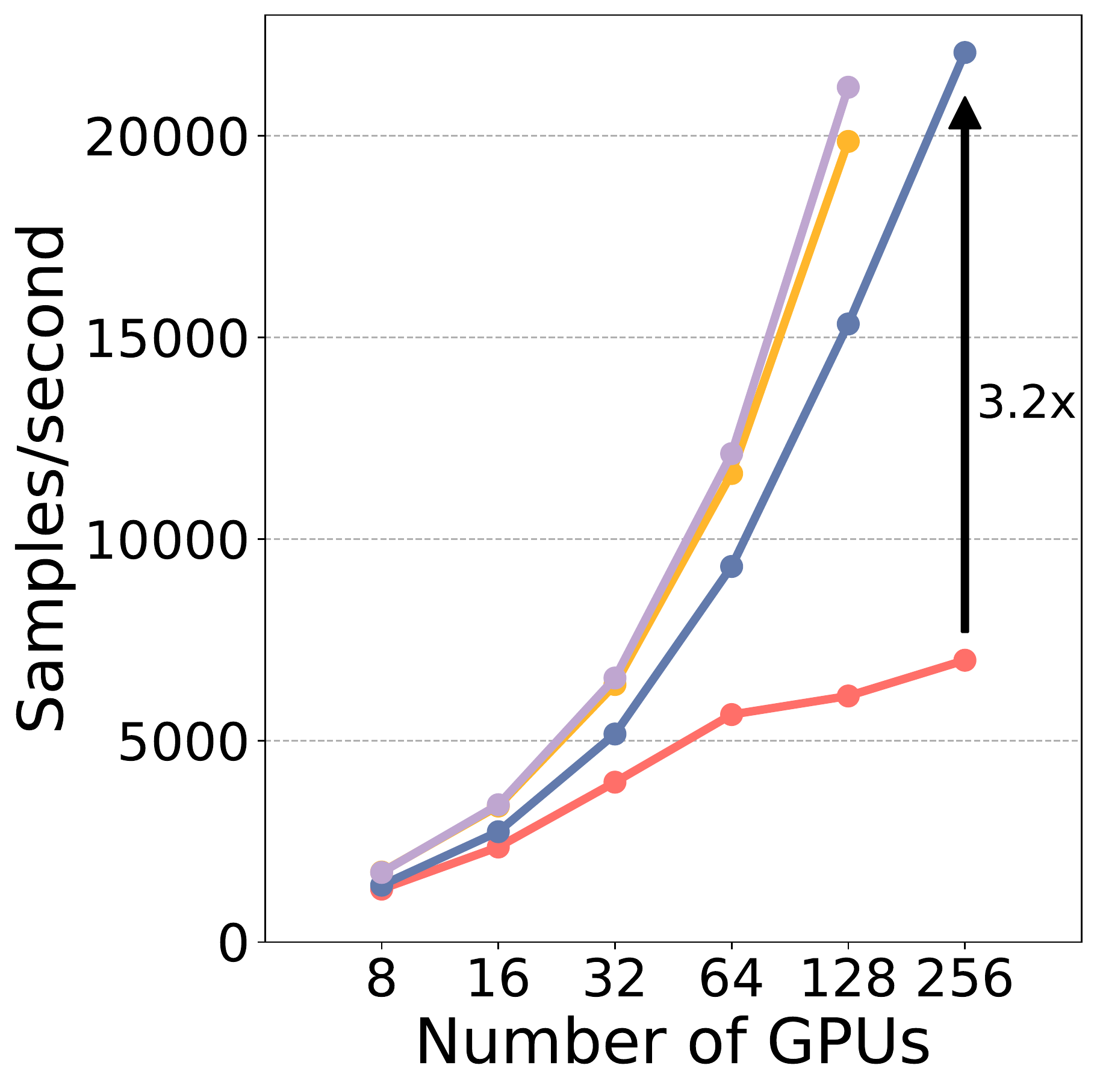}\label{fig:1bitlamb_berteval_3}
\end{minipage}}
\subfigure[seqlen 128, batch size 64K]{
\begin{minipage}[t]{0.22\linewidth}
\centering
\includegraphics[width=1\textwidth]{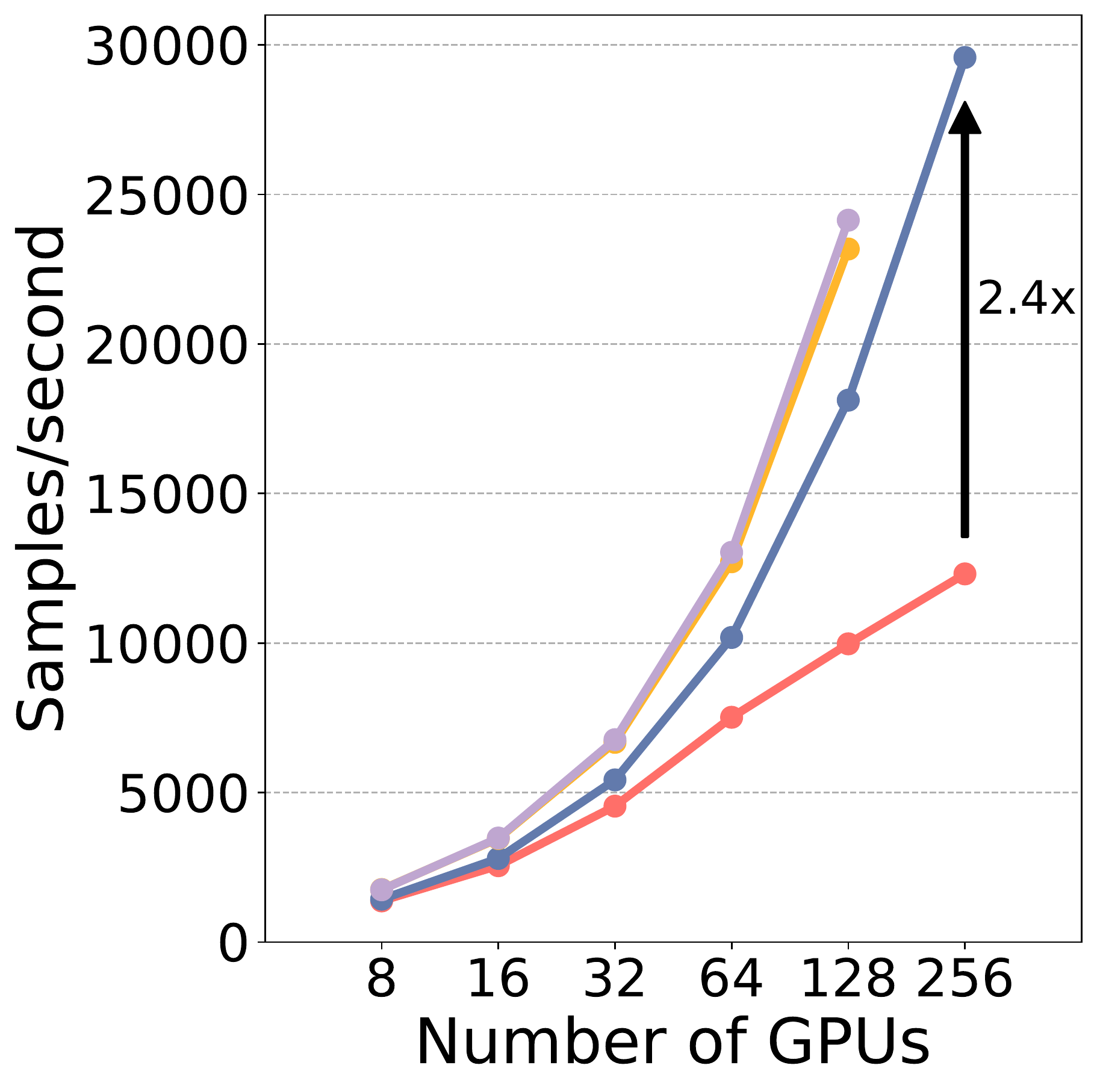}\label{fig:1bitlamb_berteval_4}
\end{minipage}}
\medskip
\subfigure[seqlen 512, batch size 8K]{
\begin{minipage}[t]{0.22\linewidth}
\centering
\includegraphics[width=1\textwidth]{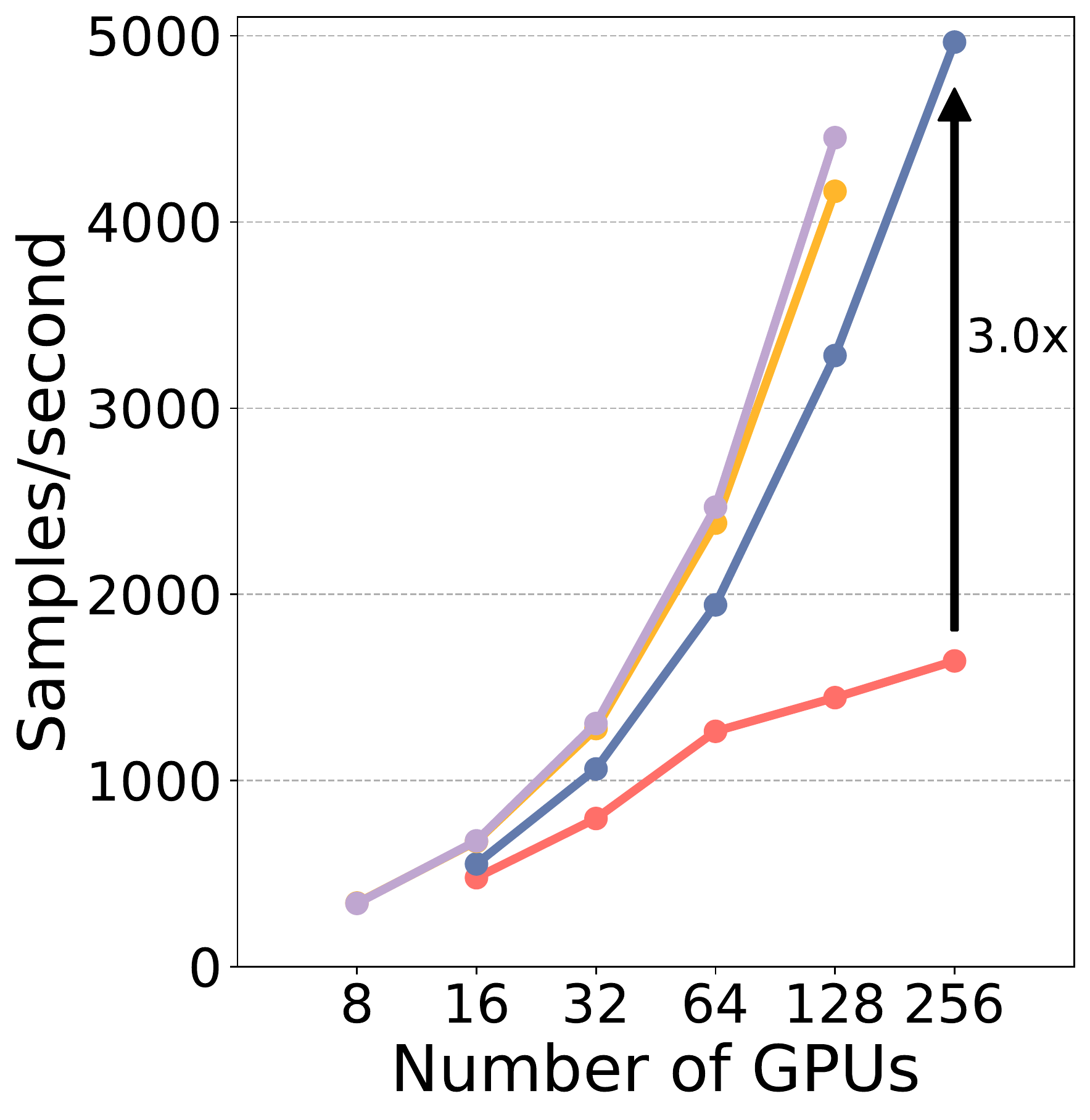}\label{fig:1bitlamb_berteval_5}
\end{minipage}}\quad
\subfigure[seqlen 512, batch size 16K]{
\begin{minipage}[t]{0.22\linewidth}
\centering
\includegraphics[width=1\textwidth]{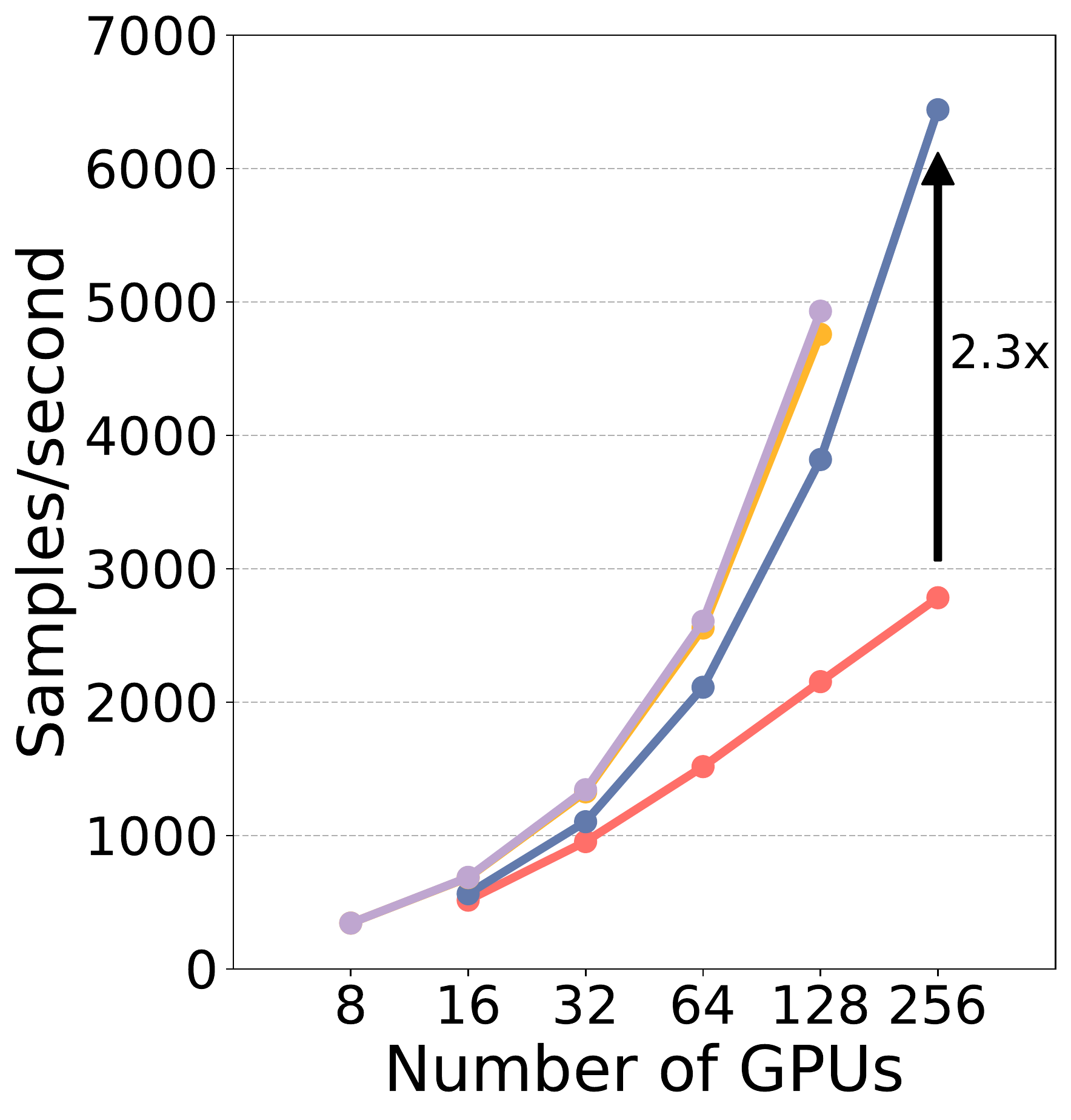}\label{fig:1bitlamb_berteval_6}
\end{minipage}}
\subfigure[seqlen 512, batch size 32K]{
\begin{minipage}[t]{0.22\linewidth}
\centering
\includegraphics[width=1\textwidth]{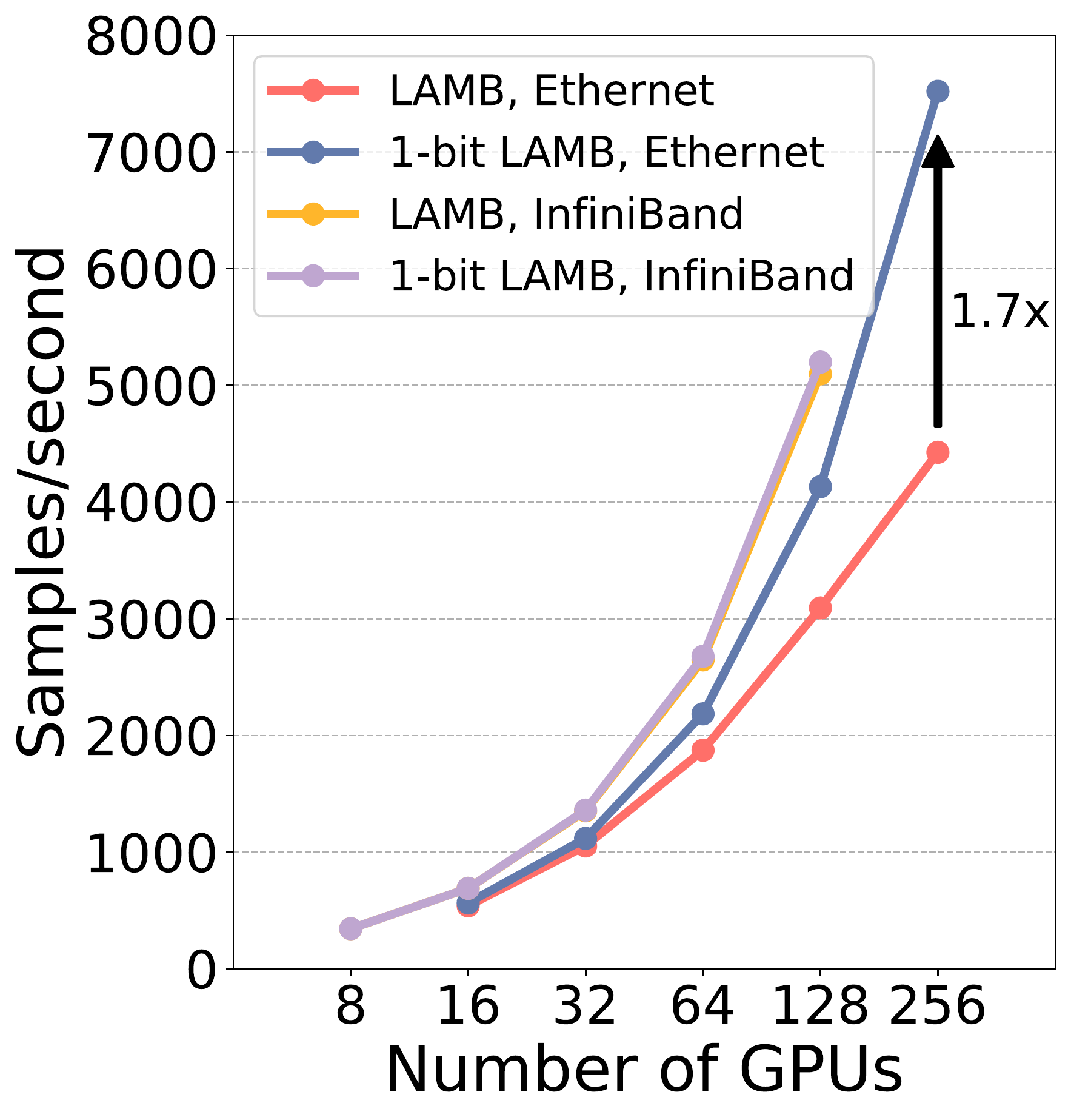}\label{fig:1bitlamb_berteval_7}
\end{minipage}}
\caption{Scalability of {\OA} with NCCL-based backend for BERT-Large pre-training on V100 GPUs. LAMB lines represent the throughput at {\OA}'s warmup stage (i.e., baseline LAMB). {\OA} lines represent the throughput at compression stage. Annotations represent the highest speedup achieved in each figure. Note that this is the speedup between warmup and compression stage. The end-to-end speedup also depends on the percentage of warmup. All figures share the same legend as~\ref{fig:1bitlamb_berteval_7}.}
\label{fig:1bitlamb_berteval}\vspace{-0.2cm}
\end{figure}

\section{Conclusion}
To reduce both the frequency and volume of communications for large-scale training, we propose an error-compensated LAMB preconditioned momentum SGD algorithm, {\OA}, which combines the power of large batch optimization and communication compression by introducing an novel way to support adaptive layerwise learning rates during communication compression. We also introduce an easier-to-use and more efficient compressed communication backend system based on NCCL. Evaluations show that {\OA} with NCCL-based backend is able to achieve up to $4.6\times$ communication volume reduction and up to $2.8\times$ end-to-end time-wise speedup for BERT pre-training compared to uncompressed LAMB, together with the same sample-wise convergence speed and fine-tuning task accuracy.

\newpage
\bibliographystyle{iclr2022_conference}
\bibliography{reference}

\iftrue

\newpage
\def\x{\bm{x}}
\def\m{\bm{m}}
\def\v{\bm{v}}
\appendix
\section{Appendix}
\subsection{Training parameters for 1-bit Adam experiments in Section~\ref{sec:moti_1bitadam}}
\label{sec:appendix_1bitadam}
For experiments in Section~\ref{sec:moti_1bitadam}, for both LAMB and 1-bit Adam we use batch size = 16K, 28125/3125 steps for seqlen 128/512, weight decay = 0.01, linear LR warmup and decay. For LAMB, we use learning rate = $3.54\times 10^{-3}$, 10\% LR warmup, clipping configs ($c_{min}$ and $c_{max}$ in \eqref{alg:lamb}) as 0.1 and 1. For 1-bit Adam, we use learning rates $\in\{1\times 10^{-4},2\times 10^{-4},3\times 10^{-4}\}$, LR warmup $\in\{5\%,10\%,20\%\}$. All of these training parameters (except LAMB clipping configs) are from the LAMB paper. For 1-bit Adam, following the original work's strategy we set the number of warmup steps as 4000 (out of total 28125 steps) for seqlen 128 and 475 (out of 3125) for seqlen 512.

\subsection{Detailed results for profiling experiments in Section~\ref{sec:moti_comm_overhead}}
\label{sec:appendix_comm_overhead}
Table~\ref{table:comm_overhead} presents the detailed profiling results. Results show that even with larger batch sizes the allreduce communication still contributes to a great portion of the training time per step, up to 91\% and 52\% on two different kinds of clusters. And this overhead is larger when the number of nodes is larger, when the batch size is smaller, when the network bandwidth is lower.

\begin{table*}[t]
  \footnotesize
%   \small
  \caption{BERT-Large pre-training seqlen 128 profiling results.}\label{table:comm_overhead}
  \centering
  \begin{tabular}{rrrrrrrrrrr}
  \hline
  Cluster& Num.& Num.& Batch& Batch& Grad& Forward& Backward& Backward& Step& allreduce\% \\
  Network& node& GPU& size per& size& accum.& (ms)& allreduce& everything& (ms)&  \\
  Type& & & GPU& & step& & (ms)& else (ms)& &  \\
  \hline
  Ethernet& 64& 256& 16& 8K& 2& 55& 3579& 117& 191& \textbf{91\%} \\
  Ethernet& 64& 256& 16& 16K& 4& 111& 3533& 227& 195& 87\% \\
  Ethernet& 64& 256& 16& 32K& 8& 224& 3599& 462& 233& 80\% \\
  Ethernet& 64& 256& 16& 64K& 16& 445& 3674& 919& 215& 70\% \\
  Ethernet& 32& 128& 16& 8K& 4& 112& 3759& 233& 121& 89\% \\
  Ethernet& 16& 64& 16& 8K& 8& 223& 3433& 464& 109& 81\% \\
  Ethernet& 8& 32& 16& 8K& 16& 445& 3528& 923& 38& 72\% \\
  Ethernet& 4& 16& 16& 8K& 32& 881& 3436& 1827& 33& 56\% \\
  Ethernet& 2& 8& 16& 8K& 64& 1773& 2087& 3696& 31& 28\% \\
  Ethernet& 1& 4& 16& 8K& 128& 3532& 234& 7329& 30& 2\% \\
  \hline
  InfiniBand& 16& 128& 64& 8K& 1& 96& 335& 179& 36& \textbf{52\%} \\
  InfiniBand& 16& 128& 64& 16K& 2& 192& 346& 356& 37& 37\% \\
  InfiniBand& 16& 128& 64& 32K& 4& 381& 377& 714& 37& 25\% \\
  InfiniBand& 16& 128& 64& 64K& 8& 770& 422& 1422& 32& 16\% \\
  InfiniBand& 8& 64& 64& 8K& 2& 192& 332& 352& 34& 36\% \\
  InfiniBand& 4& 32& 64& 8K& 4& 384& 339& 711& 31& 23\% \\
  InfiniBand& 2& 16& 64& 8K& 8& 768& 270& 1436& 31& 11\% \\
  InfiniBand& 1& 8& 64& 8K& 16& 1534& 167& 2869& 31& 4\% \\
  \hline
  \end{tabular}\vspace{-0.05cm}
\end{table*}

\subsection{Detailed results for LAMB experiments in Section~\ref{sec:moti_lamb}}
\label{sec:appendix_lamb}
Figure~\ref{fig:lamb_coeff} presents LAMB's scaling coefficients for different layers during BERT pre-training sequence length 128 (sequence length 512 has similar patterns). Only the \texttt{cls.seq\_relationship.bias} has a very unstable scaling coefficient. This is because this bias only has two elements, representing the two states of whether the two sequences are next to each other. 

Figure~\ref{fig:lamb_var} presents LAMB's variance norms for different layers during BERT pre-training sequence length 128 (sequence length 512 has similar patterns). We believe that there are two reasons why LAMB's varaince terms are less stable: 1) LAMB has larger batch size, smaller number of steps, and layerwise adaptive learning rates compared to Adam. 2) Because LAMB requires the calculation of the scaling coefficient for each layer, we cannot fuse all the variance together as in Adam. And each separate variance could have less stable norm compared to a single fused variance.

\begin{figure}[t]
\centering
\subfigure[BertEmbeddings]{
\begin{minipage}[t]{0.31\linewidth}
\centering
\includegraphics[width=1\textwidth]{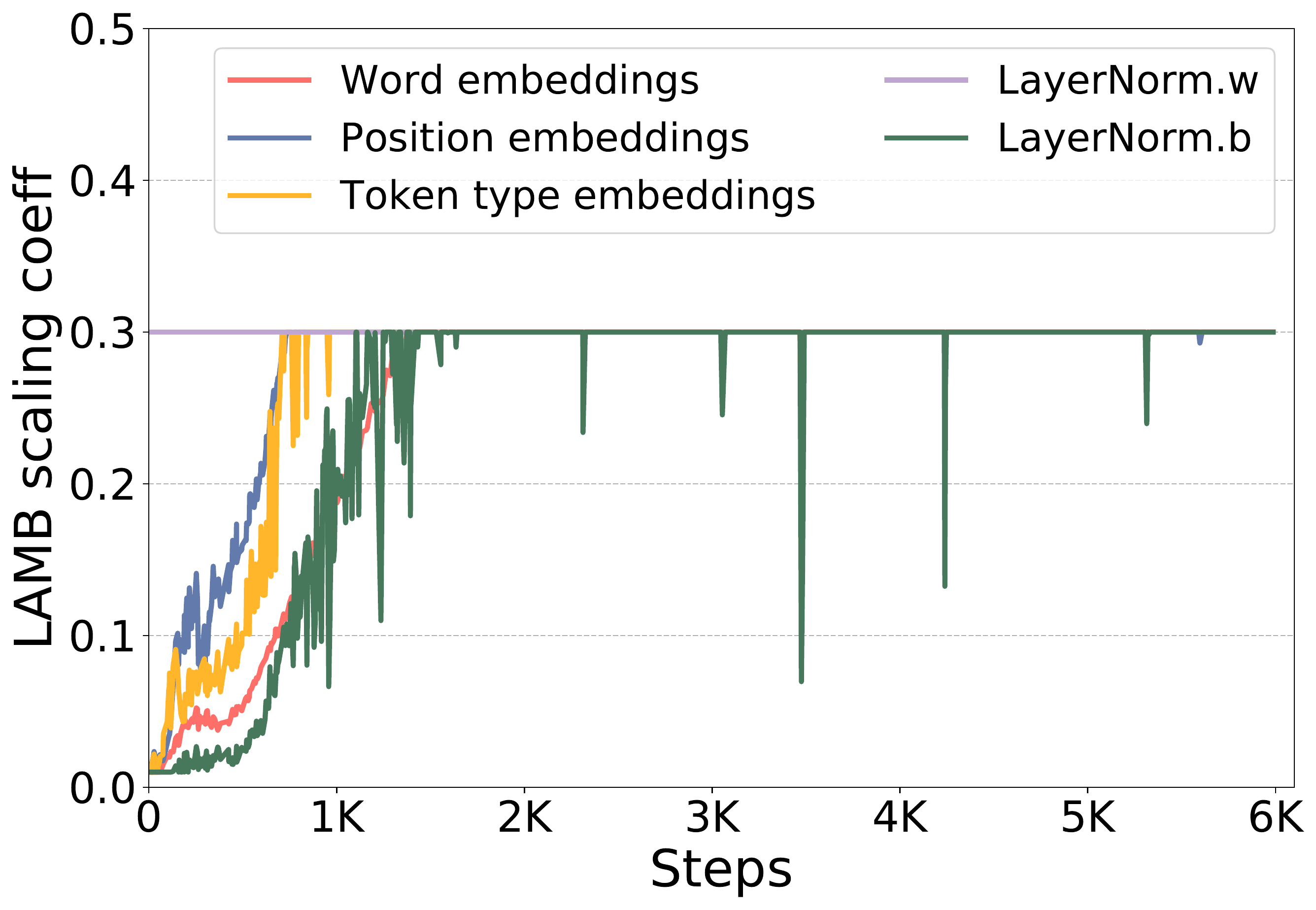}\label{fig:lamb_coeff_1}
\end{minipage}}\quad
\subfigure[First BertLayer, part 1]{
\begin{minipage}[t]{0.31\linewidth}
\centering
\includegraphics[width=1\textwidth]{fig/moti_lamb_coeff_2.pdf}\label{fig:lamb_coeff_2}
\end{minipage}}\quad
\subfigure[First BertLayer, part 2]{
\begin{minipage}[t]{0.31\linewidth}
\centering
\includegraphics[width=1\textwidth]{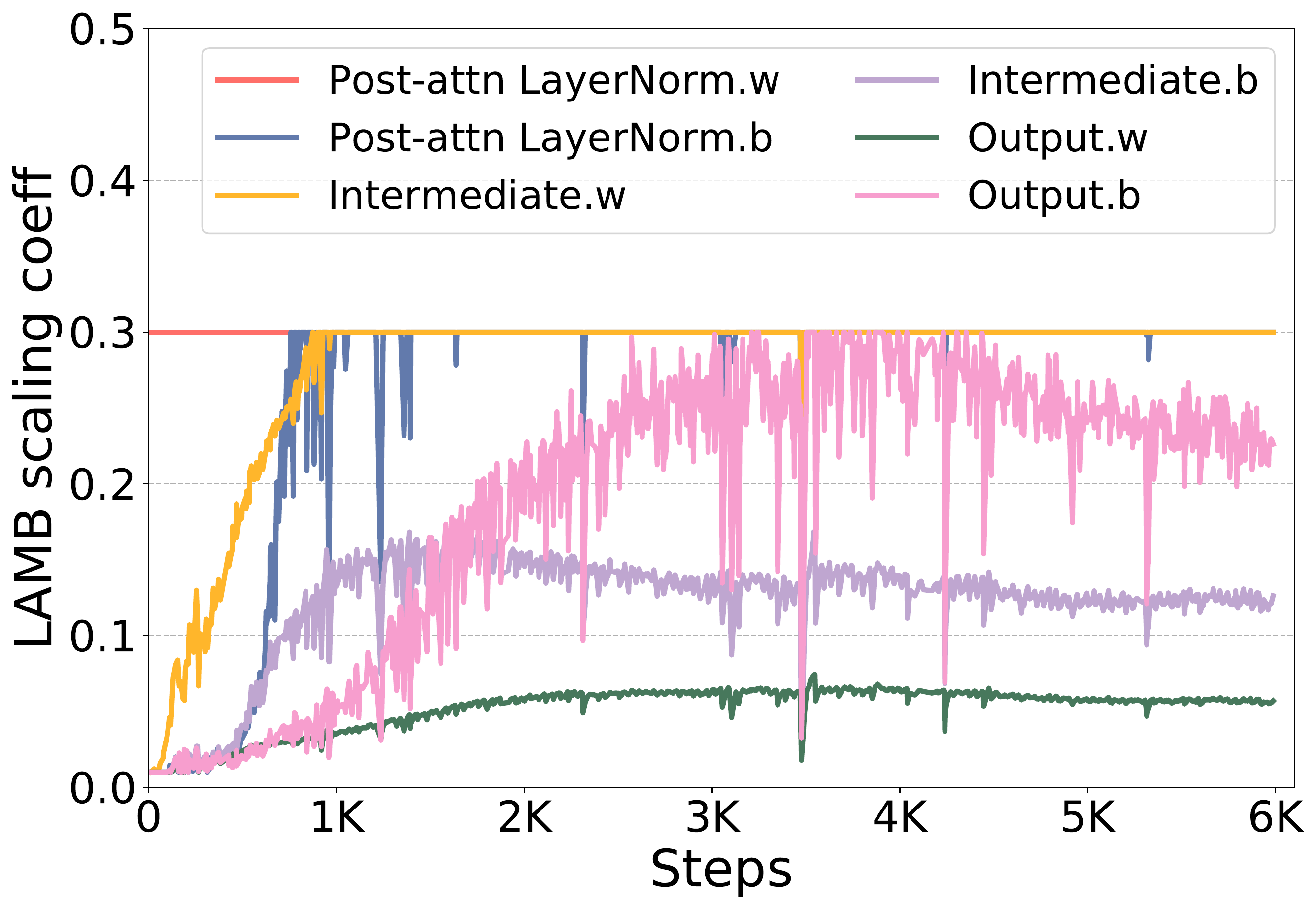}\label{fig:lamb_coeff_3}
\end{minipage}}
\medskip
\subfigure[BertLMPredictionHead]{
\begin{minipage}[t]{0.31\linewidth}
\centering
\includegraphics[width=1\textwidth]{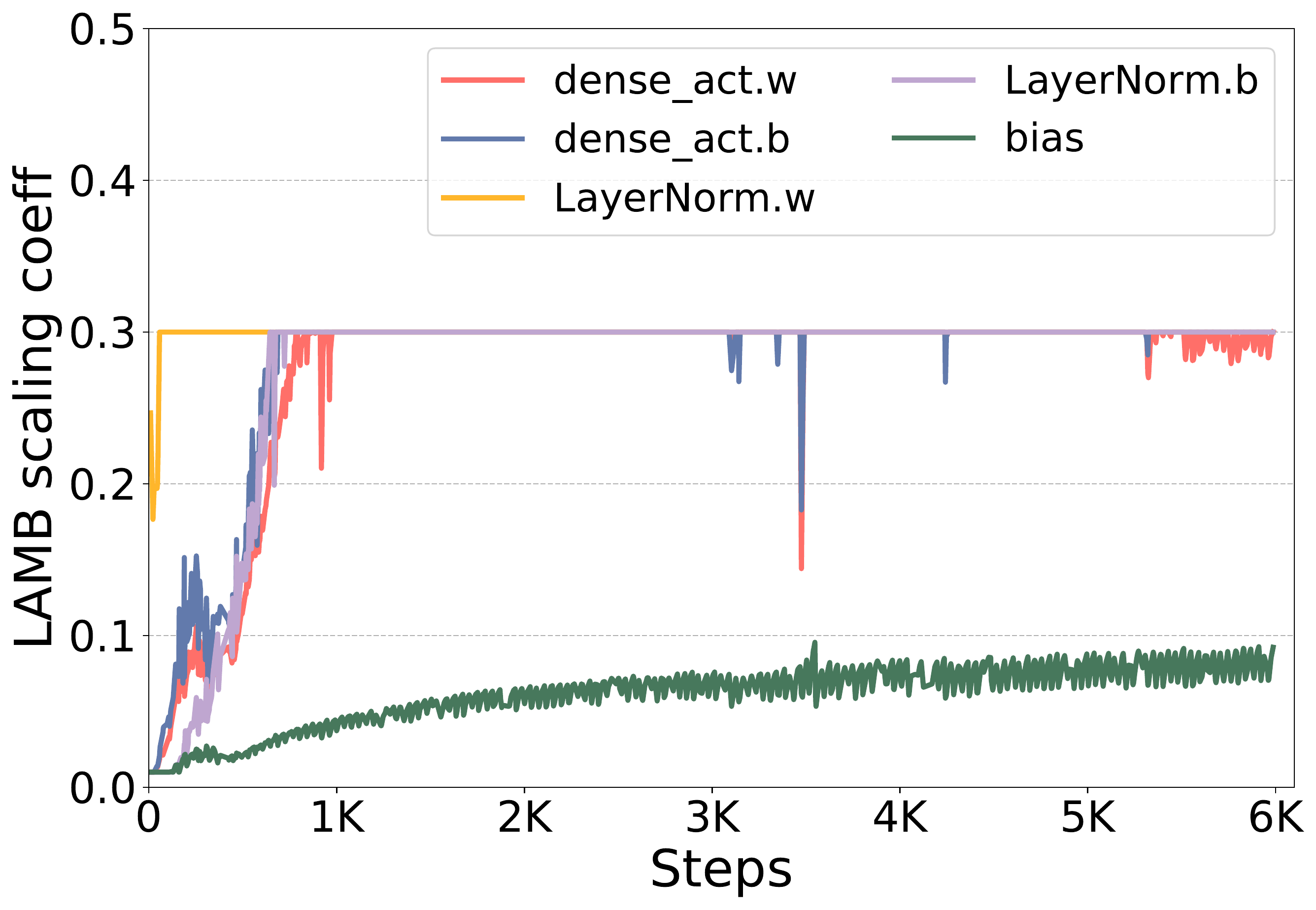}\label{fig:lamb_coeff_4}
\end{minipage}}\quad
\subfigure[cls.seq\_relationship]{
\begin{minipage}[t]{0.31\linewidth}
\centering
\includegraphics[width=1\textwidth]{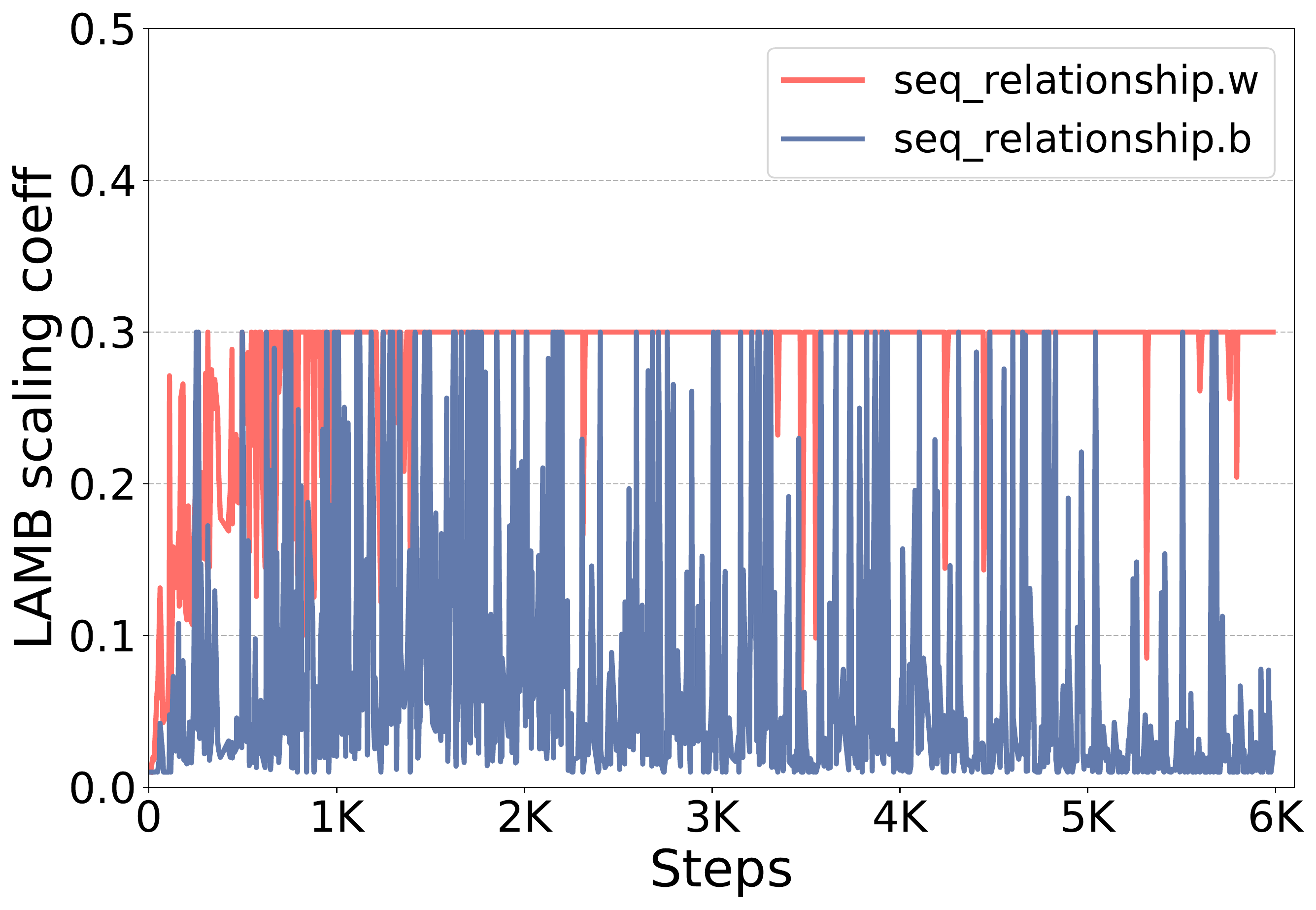}\label{fig:lamb_coeff_5}
\end{minipage}}
\caption{LAMB's scaling coefficients ($c_t^\tl$ in \eqref{alg:lamb}) for different layers during BERT-Large pre-training seqlen 128 (5993 steps in total). Since all 24 BertLayer have similar patterns, we just present the first one. We set the lower/upper bound of the scaling coefficient ($c_{min}$ and $c_{max}$ in \eqref{alg:lamb}) at 0.01 and 0.3.}
\label{fig:lamb_coeff}\vspace{-0.05cm}
\end{figure}

\begin{figure}[t]
\centering
\subfigure[BertEmbeddings]{
\begin{minipage}[t]{0.31\linewidth}
\centering
\includegraphics[width=1\textwidth]{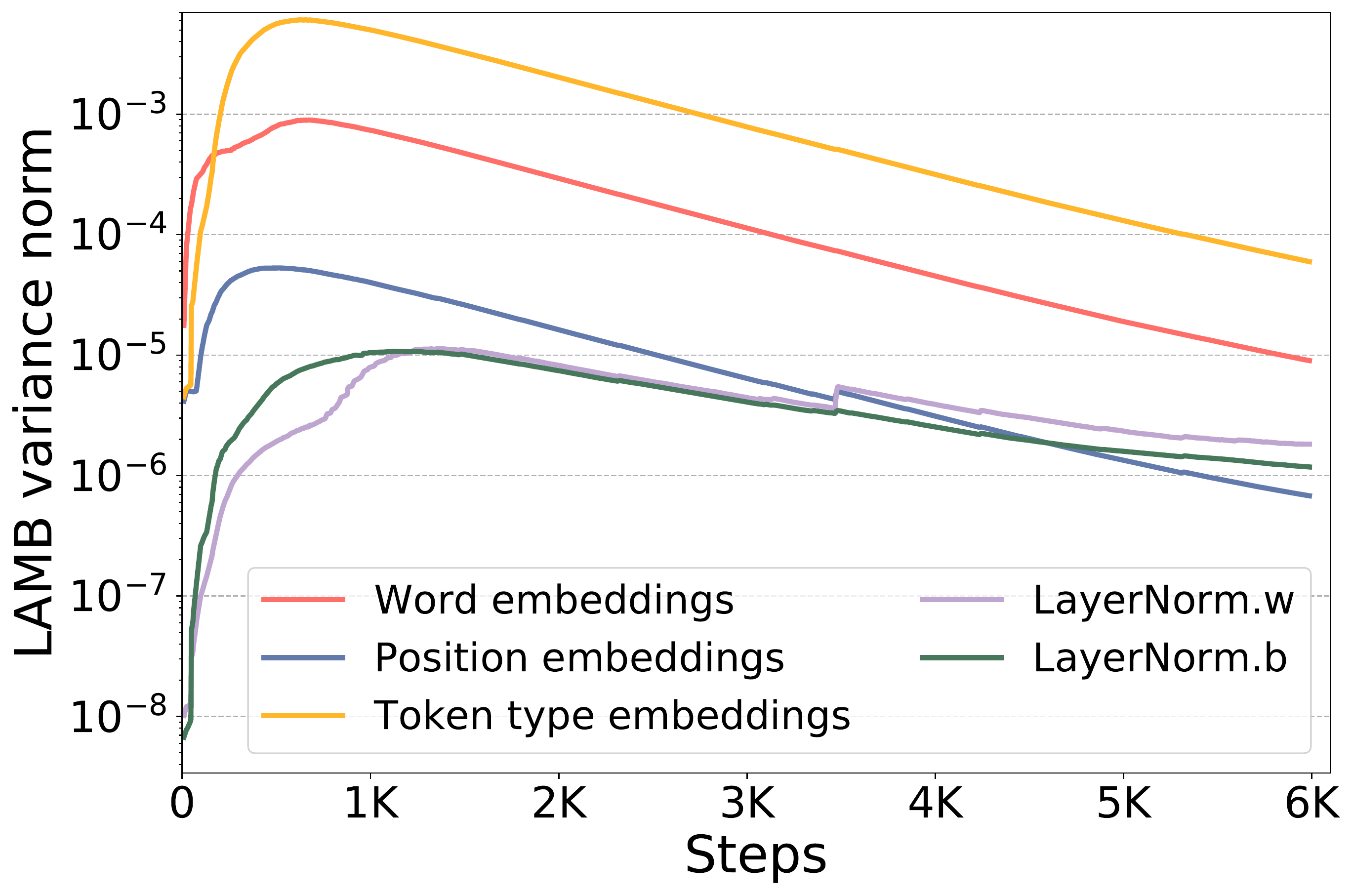}\label{fig:lamb_var_1}
\end{minipage}}\quad
\subfigure[First BertLayer, part 1]{
\begin{minipage}[t]{0.31\linewidth}
\centering
\includegraphics[width=1\textwidth]{fig/moti_lamb_var_2.pdf}\label{fig:lamb_var_2}
\end{minipage}}\quad
\subfigure[First BertLayer, part 2]{
\begin{minipage}[t]{0.31\linewidth}
\centering
\includegraphics[width=1\textwidth]{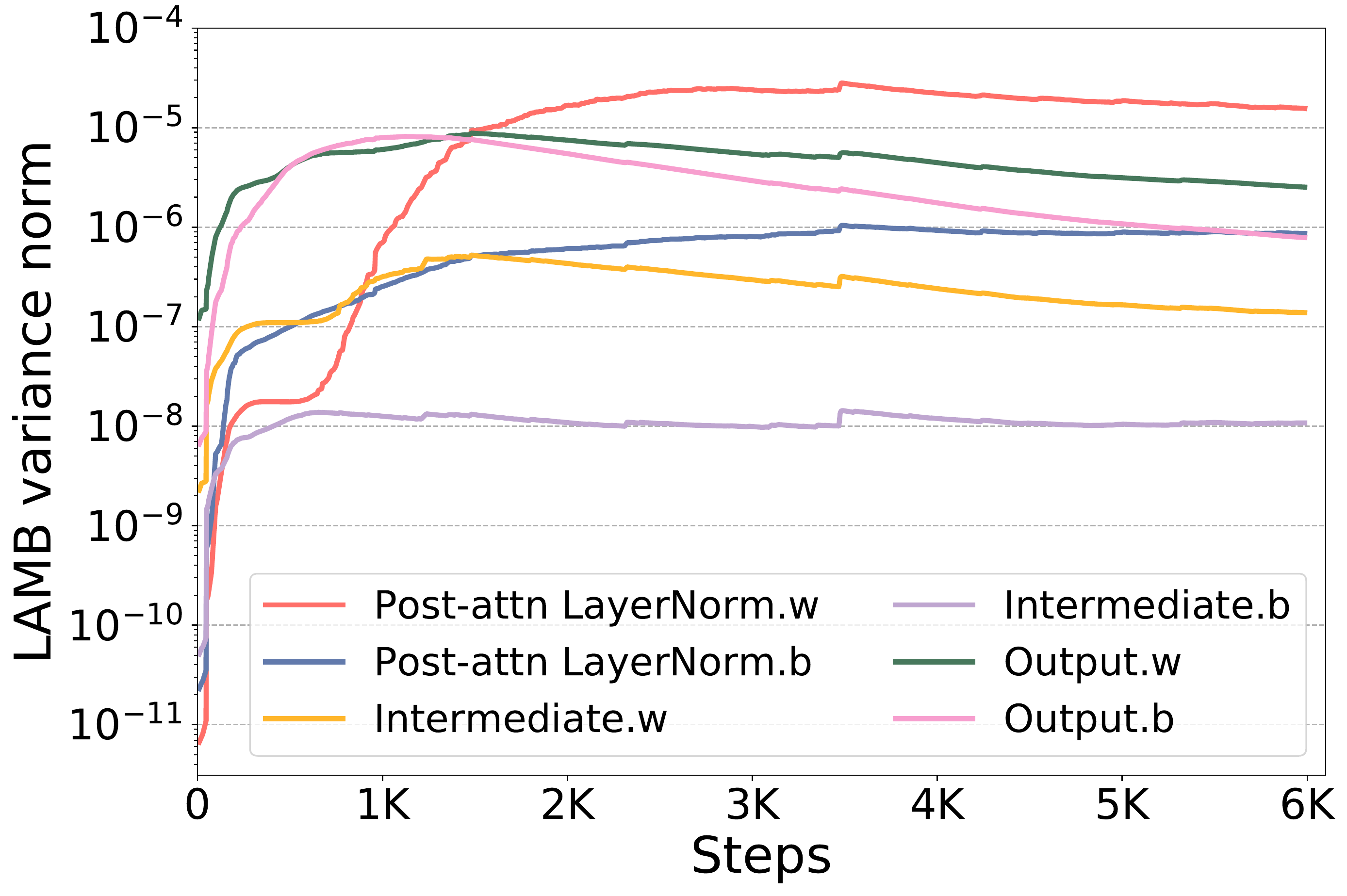}\label{fig:lamb_var_3}
\end{minipage}}
\medskip
\subfigure[BertLMPredictionHead]{
\begin{minipage}[t]{0.31\linewidth}
\centering
\includegraphics[width=1\textwidth]{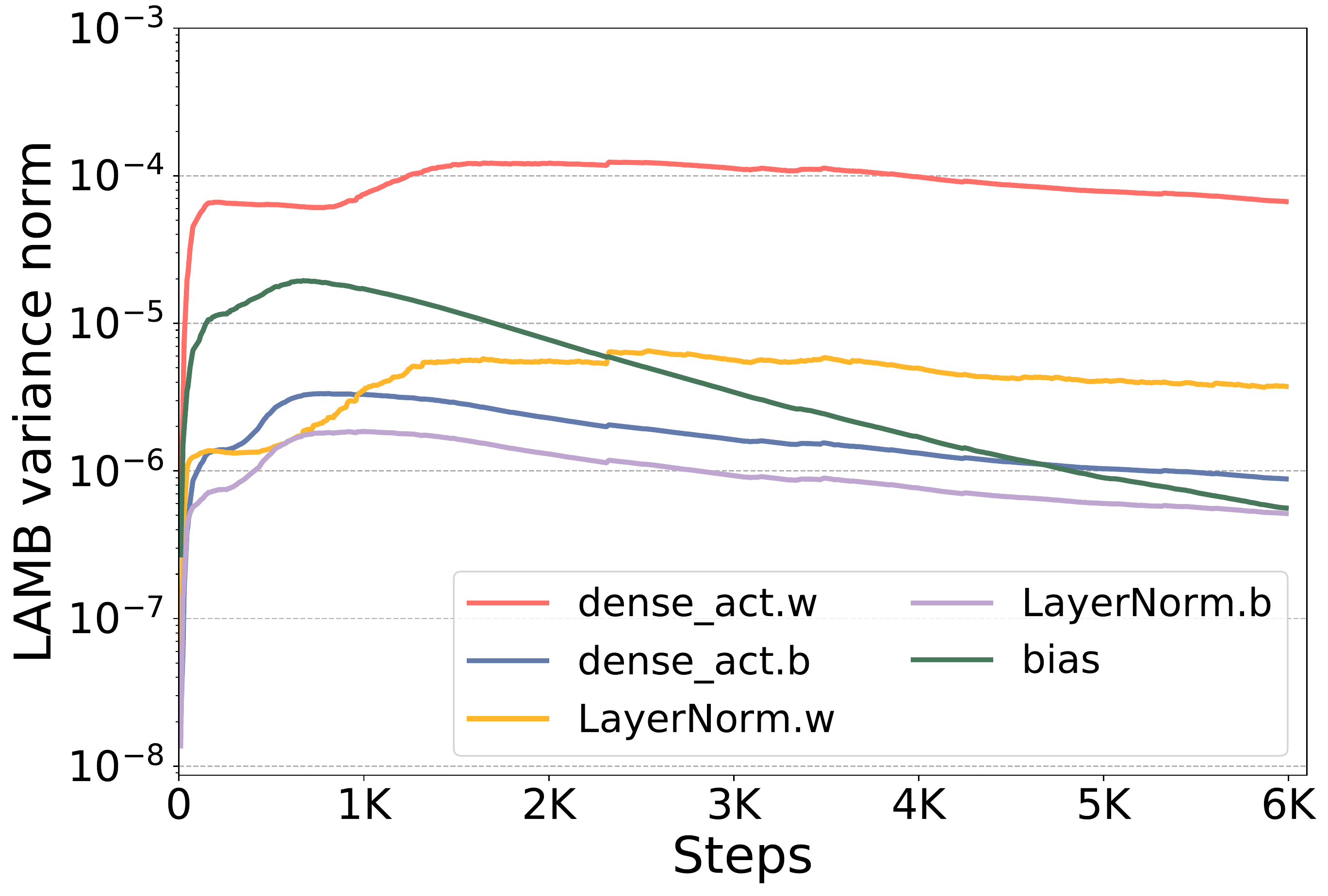}\label{fig:lamb_var_4}
\end{minipage}}\quad
\subfigure[cls.seq\_relationship]{
\begin{minipage}[t]{0.31\linewidth}
\centering
\includegraphics[width=1\textwidth]{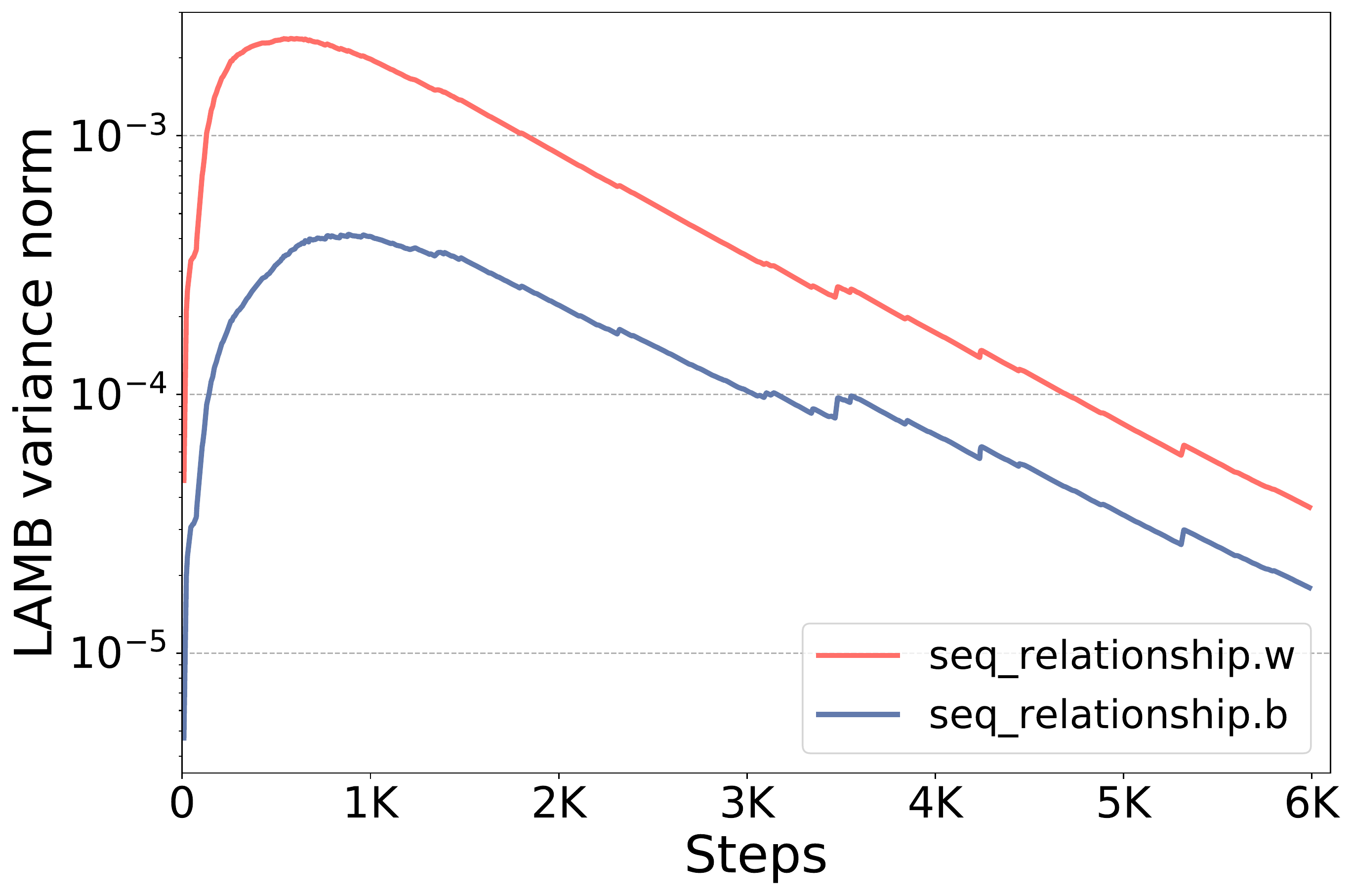}\label{fig:lamb_var_5}
\end{minipage}}
\caption{LAMB's variance norms for different layers during BERT-Large pre-training seqlen 128. Since all 24 BertLayer have similar patterns, we just present the first one. The y-axis is in log scale.}
\label{fig:lamb_var}\vspace{-0.05cm}
\end{figure}

\subsection{Design of the experimental algorithm in Section~\ref{sec:moti_lamb_freeze}}
\label{sec:appendix_lamb_freeze}
Because LAMB's scaling coefficient is essentially part of the learning rate (and it's just a scalar), updating this coefficient would not affect communication compression's error compensation mechanism as long as the change is small enough between two steps.\footnote{In fact as described in Section~\ref{sec:algo}, the proposed {\OA} algorithm does require updating LAMB's scaling coefficient during compression stage, but in a way different from original LAMB algorithm.} However, we find that it's challenging to update this coefficient during compression stage because in LAMB's algorithm, updating this scaling coefficient requires both momentum and variance term. However, the error compensation mechanism requires freezing the variance term at the beginning of the compression stage due to its nonlinear dependency to the gradient. In addition, we find that due to the error compensation mechanism, the norm of some layers' momentum term could become larger/smaller compared to the uncompressed case. As a result, we find that updating LAMB's scaling coefficients during compression stage based on original LAMB algorithm would produce suboptimal results and slower convergence speed. Thus for Section~\ref{sec:moti_lamb_freeze}'s experimental algorithm, we stop updating LAMB's scaling coefficients during compression stage, and just use a calculated scaling coefficient moving average at the end of the warmup stage. For compression stage we also tried communicating based on 1-bit compressed gradient, but it leads to much slower convergence speed. We believe it's because gradients are less stable, which could lead to higher compression error and slower convergence.

\subsection{{\OA} algorithm design and implementation details}
\label{sec:appendix_algo}
Figure~\ref{fig:1bitlamb_factor} presents {\OA} scaling ratios ($r_t^\tl$ in Algorithm~\ref{alg:1bitlamb}) for different layers during BERT pre-training sequence length 128 (sequence length 512 has similar patterns). When comparing with Figure~\ref{fig:lamb_var}, we find that for those layers with less stable varaince (e.g., 3 kinds of embeddings, weights in BertLayer), the corresponding {\OA} scaling ratios are also larger. As a result {\OA} is able to adaptively update LAMB scaling coefficients during compression according to the difference between the frozen and fresh variance.

\begin{figure}[t]
\centering
\subfigure[BertEmbeddings]{
\begin{minipage}[t]{0.31\linewidth}
\centering
\includegraphics[width=1\textwidth]{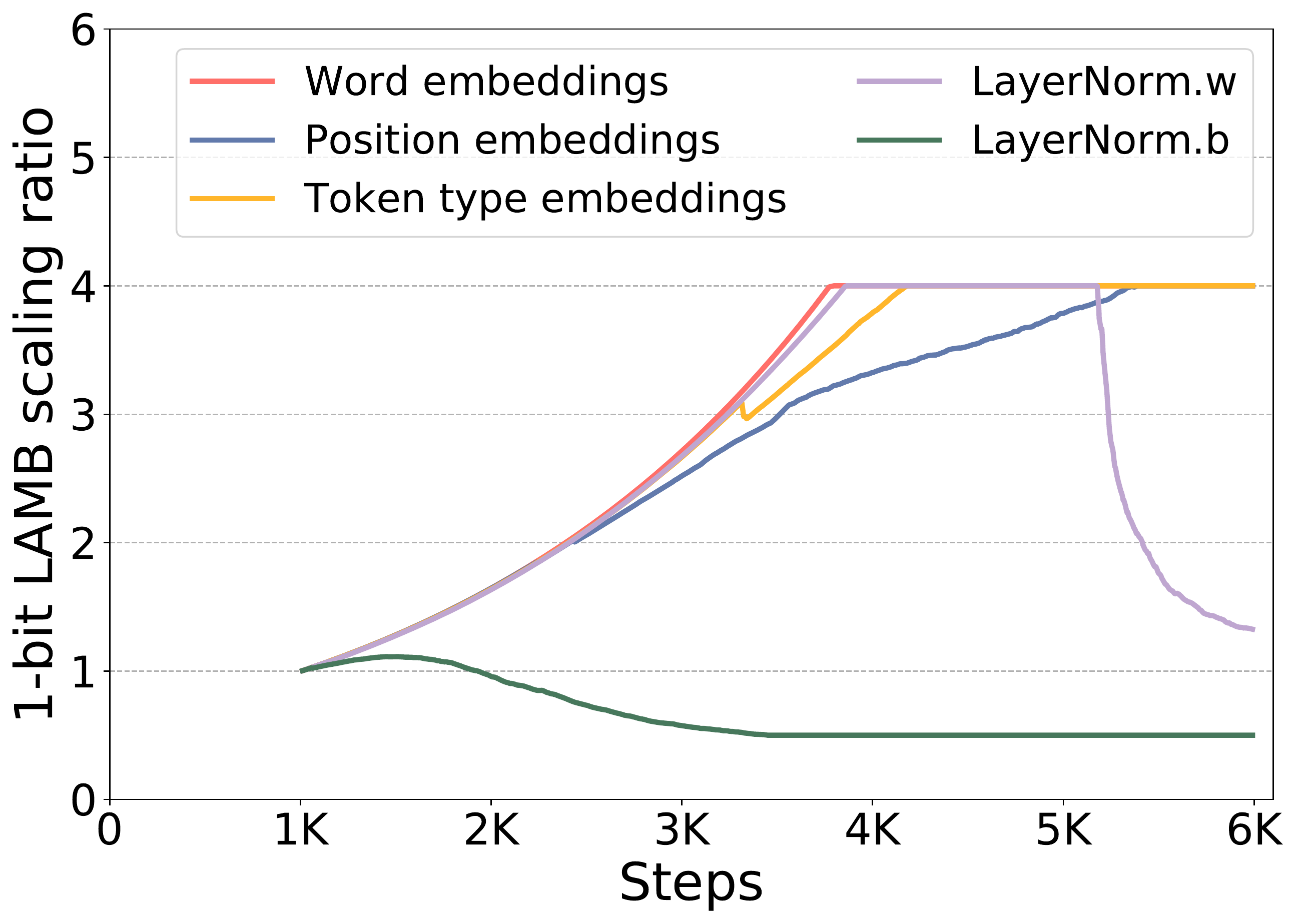}\label{fig:1bitlamb_factor_1}
\end{minipage}}\quad
\subfigure[First BertLayer, part 1]{
\begin{minipage}[t]{0.31\linewidth}
\centering
\includegraphics[width=1\textwidth]{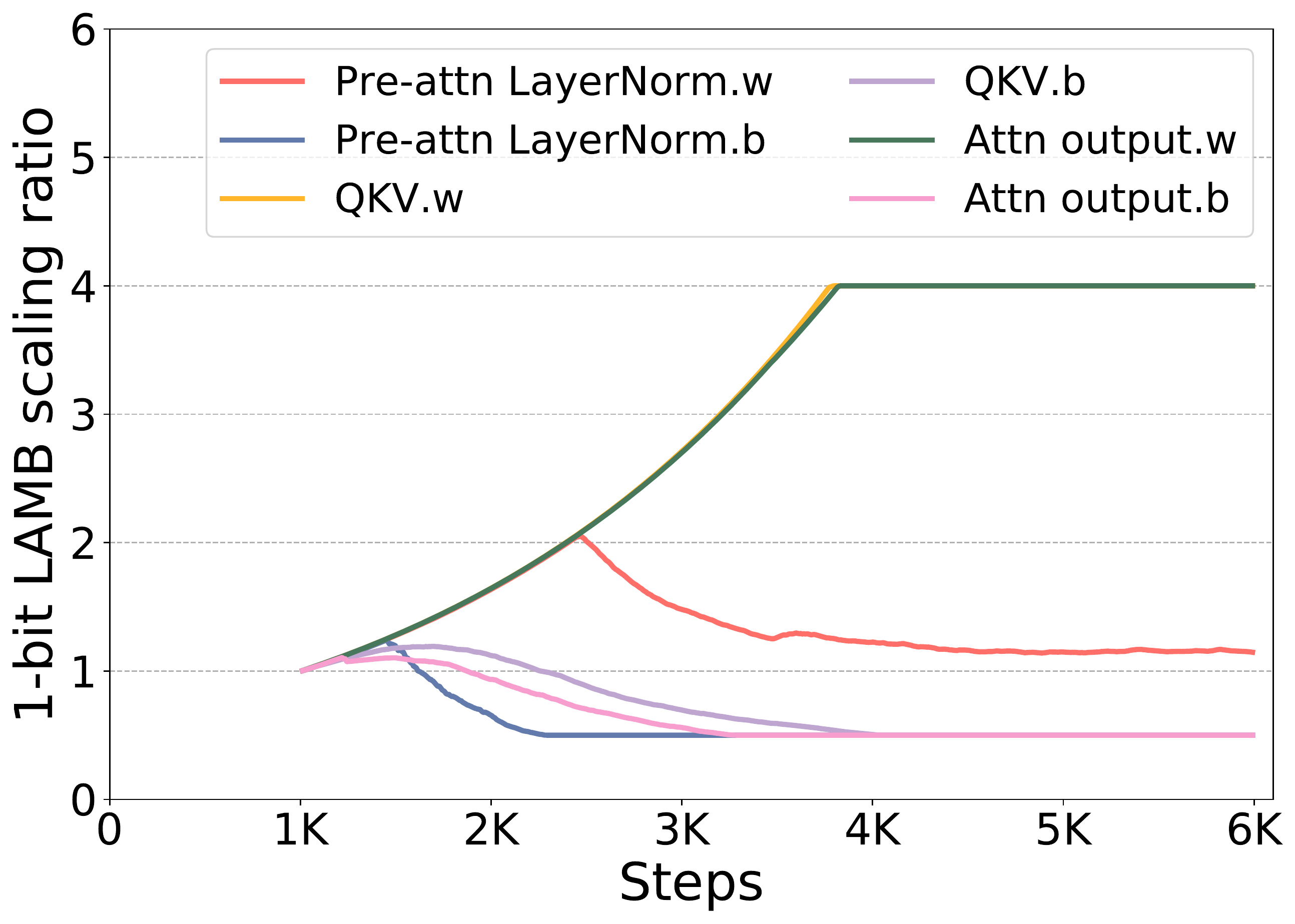}\label{fig:1bitlamb_factor_2}
\end{minipage}}\quad
\subfigure[First BertLayer, part 2]{
\begin{minipage}[t]{0.31\linewidth}
\centering
\includegraphics[width=1\textwidth]{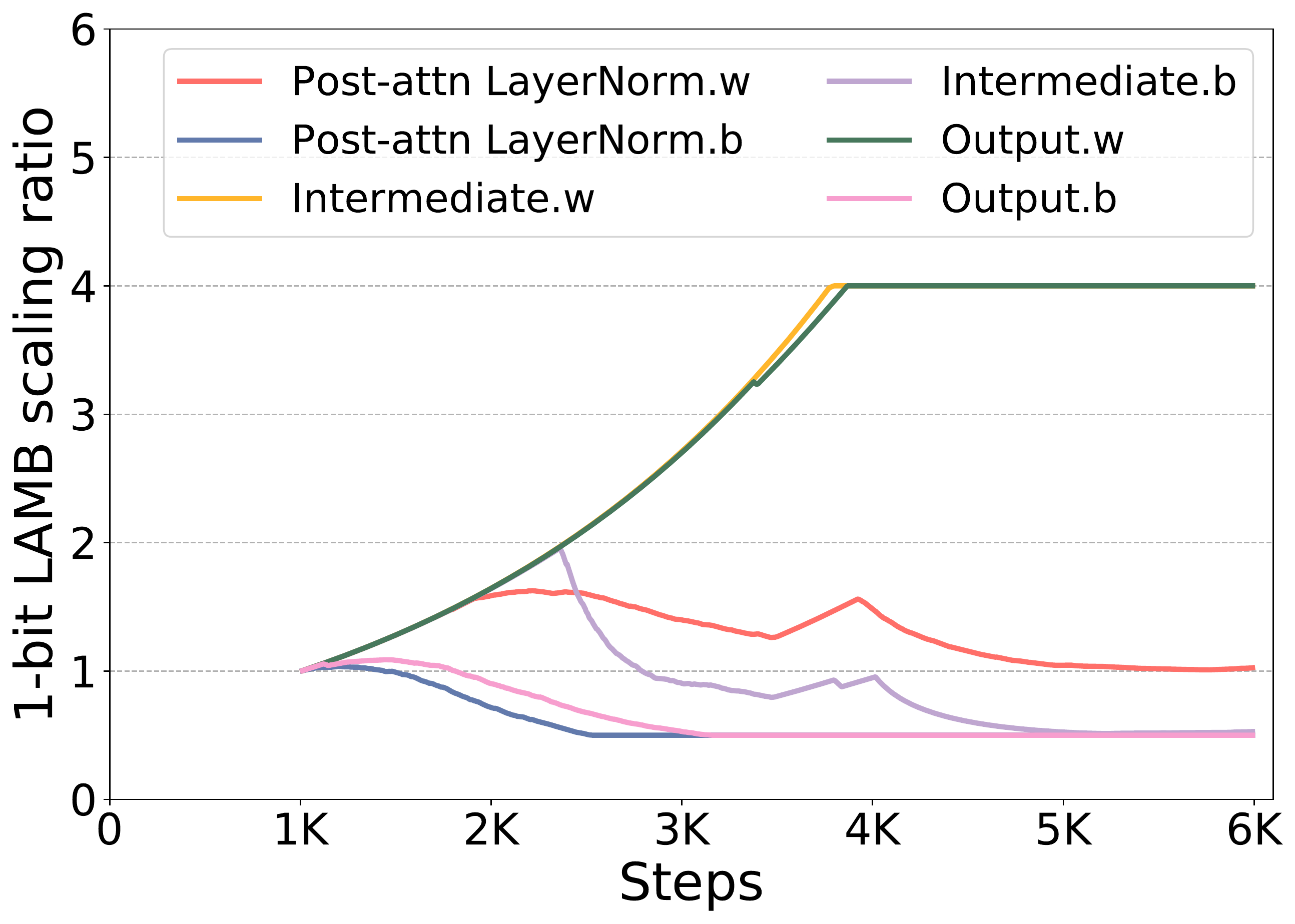}\label{fig:1bitlamb_factor_3}
\end{minipage}}
\medskip
\subfigure[BertLMPredictionHead]{
\begin{minipage}[t]{0.31\linewidth}
\centering
\includegraphics[width=1\textwidth]{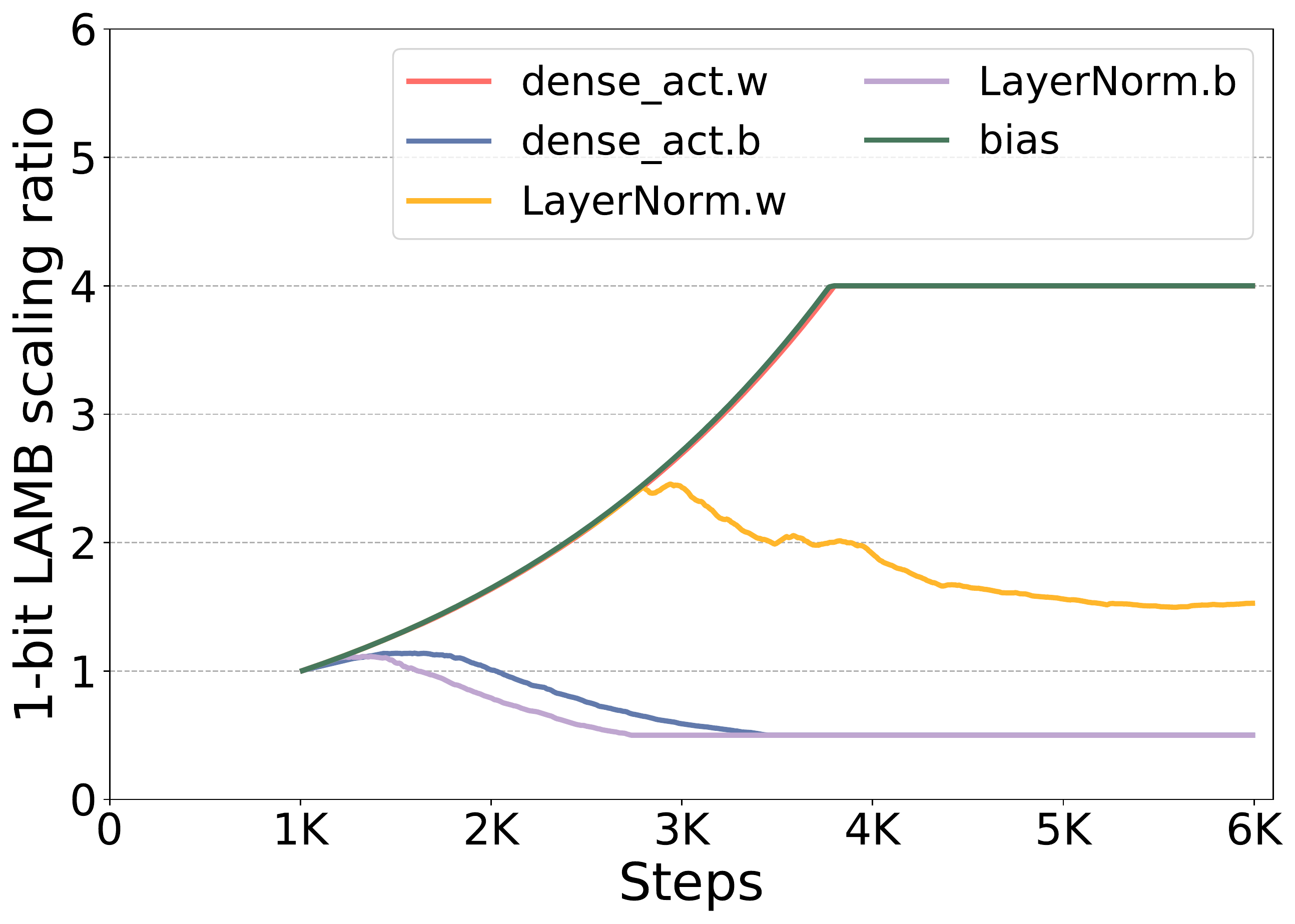}\label{fig:1bitlamb_factor_4}
\end{minipage}}\quad
\subfigure[cls.seq\_relationship]{
\begin{minipage}[t]{0.31\linewidth}
\centering
\includegraphics[width=1\textwidth]{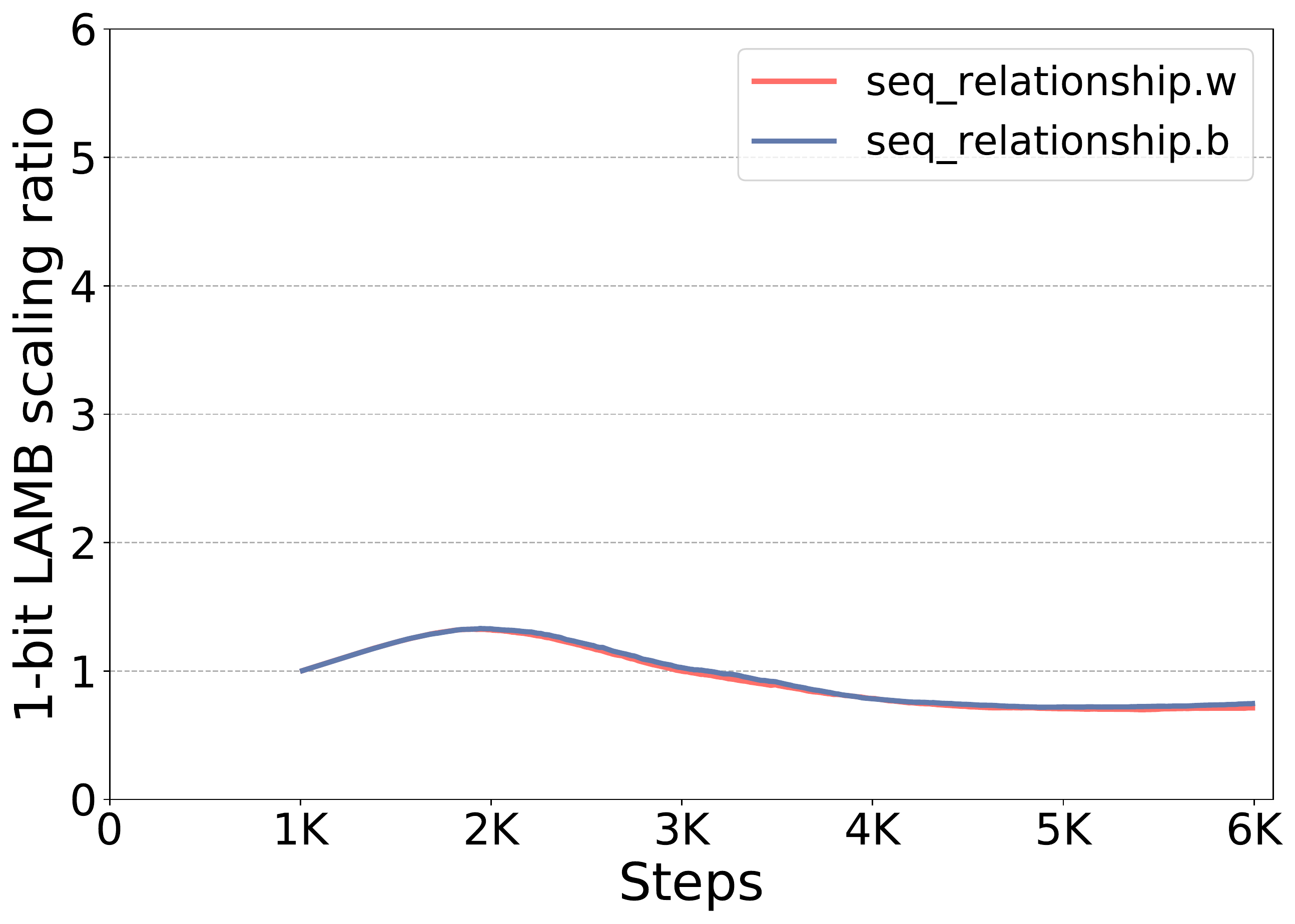}\label{fig:1bitlamb_factor_5}
\end{minipage}}
\caption{{\OA} scaling ratios ($r_t^\tl$ in Algorithm~\ref{alg:1bitlamb}) for different layers during BERT pre-training sequence length 128. Since all 24 BertLayer have similar patterns, we just present the first one. The number of warmup steps is $1K$ out of 5993 total steps. We set the clipping configurations of the scaling ratio ($r_{min}$, $r_{max}, r_{threshold}$ in Algorithm~\ref{alg:1bitlamb}) at 0.5, 4.0, 0.1.}
\label{fig:1bitlamb_factor}\vspace{-0.01cm}
\end{figure}

\subsubsection{Reducing number of communications by momentum fusion}
\label{sec:algo_fusion}
Different from Adam, LAMB has the scaling coefficients that need to be updated separately for each layer. For the communication of the compressed momentum during {\OA}'s compression stage, if we also communicate separately for each layer, the number of communications (which is 302 for BERT-Large model) will greatly affect the overall performance. Thus we fuse all layers' momentum into a contiguous 1D buffer and just do one communication over this fused momentum. This fused momentum is not a duplication, but a different ``view'' of all layers' momentum. We implement this momentum fusion by \texttt{torch.\_utils.\_flatten\_dense\_tensors} and \texttt{torch.\_utils.\_unflatten\_dense\_tensors}.

\subsubsection{Reducing compression error by momentum scaling}
\label{sec:algo_momscale}
We find that after momentum fusion, the compression error increase a lot for some layers which greatly increase the chance of divergence. This is because: 1) When communicating base on the fused momentum, all layers' momentum will be compressed to the same scale due to the nature of 1-bit compression (representing the tensor by $\pm 1$ signs and a \textit{single} scale). Thus the layers with very small/large momentum scale will have larger compression error; 2) Due to LAMB's layerwise adaptive learning rate, each layer is learning at different speed, which could further increase the variance of momentum scale among layers. We solve this issue by computing an average momentum scale at the end of warmup stage and using it to compute a momentum scale coefficient for each layer. During the compression stage, we multiply each layer's local momentum by its scale coefficient, then do the compressed communication, then divide each layer's global momentum by the same scale coefficient. By performing this momentum scaling, all layers' momentum will have similar scales when passed to the compressed communication. Thus we are able to greatly reduce the compression error and chance of divergence. Since the momentum scale coefficients are fixed during the compression stage, it won't affect 1-bit compression's error compensation mechanism.

\subsection{Comparing MPI and NCCL-based communication backend}
\label{sec:appendix_backend}
All of the results in main paper are evaluated with the NCCL-based compressed communication backend implementation. Figure~\ref{fig:eval_mpi} presents the performance comparison between MPI and NCCL-based implementations. Compared to the MPI-based implementation introduced in the 1-bit Adam work, our NCCL-based implementation is able to provide better performance on the Ethernet cluster (where OpenMPI library is used for MPI backend) and on par performance on the InfiniBand cluster (where MVAPICH2-GDR is used for MPI backend).

\begin{figure*}[t]
\centering
\subfigure[256 GPUs on Ethernet cluster]{
\begin{minipage}[t]{0.35\linewidth}
\centering
\includegraphics[width=1\textwidth]{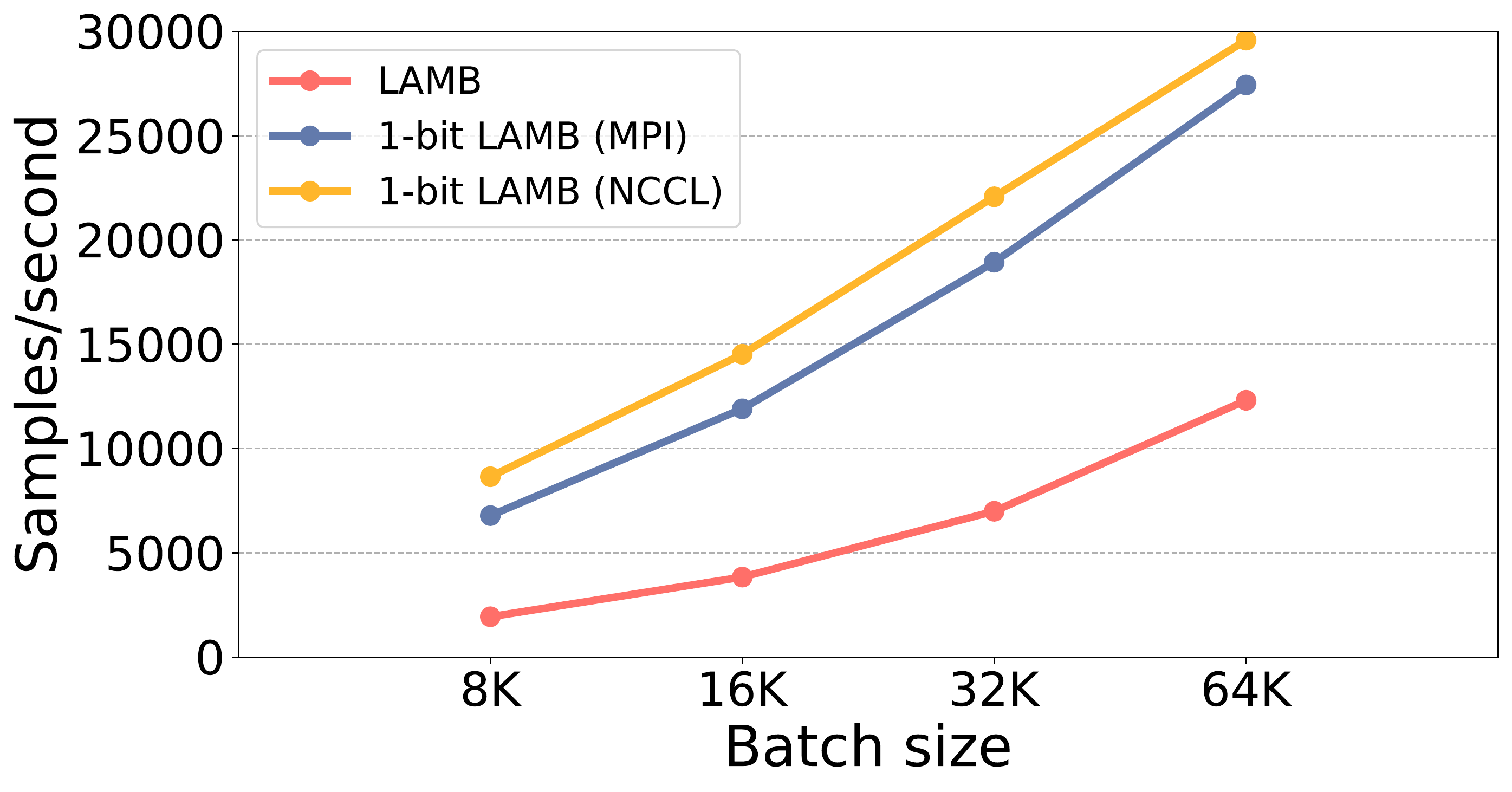}
\end{minipage}
}\quad
\subfigure[128 GPUs on InfiniBand cluster]{
\begin{minipage}[t]{0.35\linewidth}
\centering
\includegraphics[width=1\textwidth]{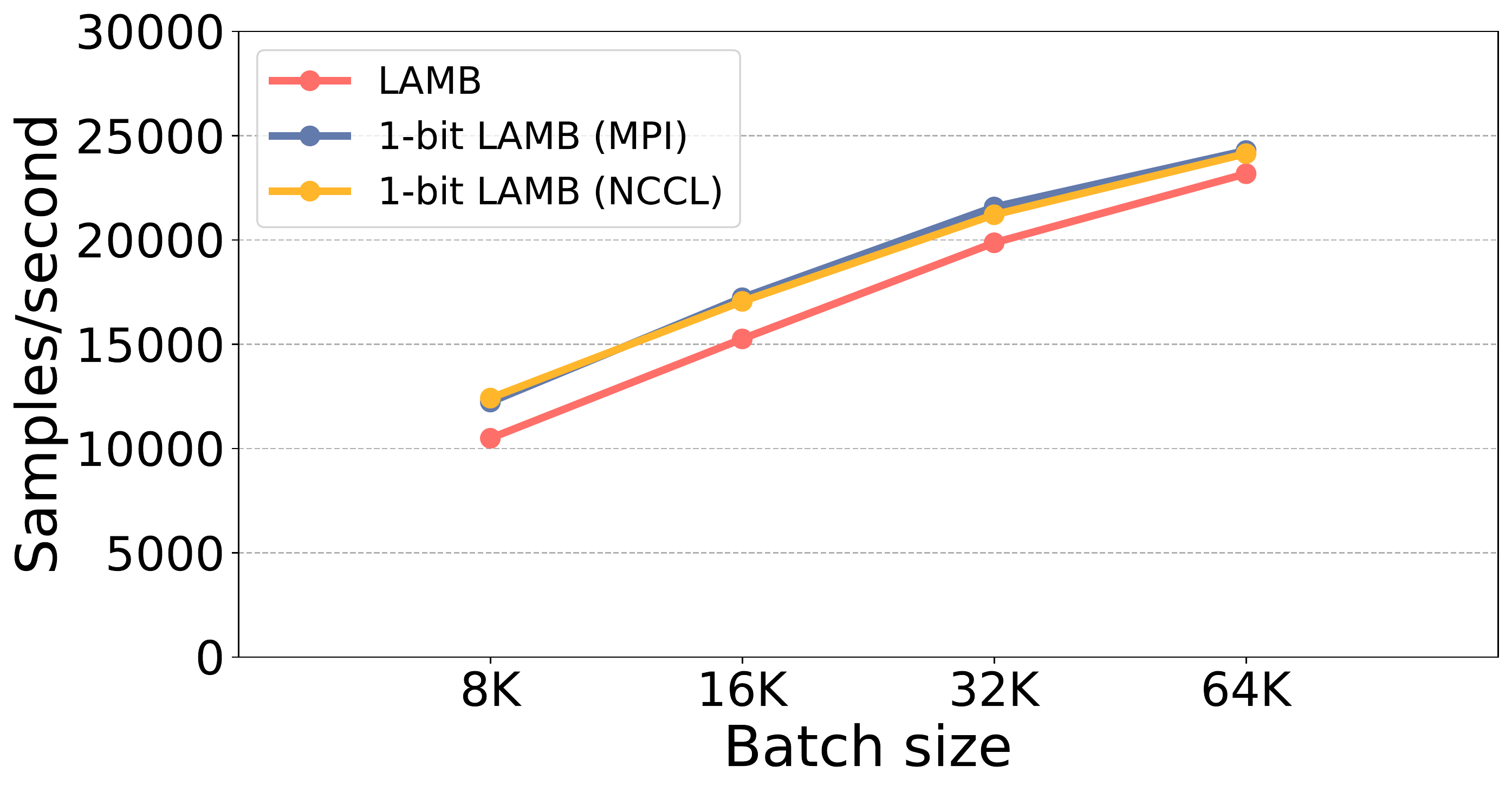}
\end{minipage}%
}
\centering
\caption{Comparing MPI and NCCL-based compressed communication backend implementations based on performance of BERT pre-training seqlen 128.}
\label{fig:eval_mpi}\vspace{-0.05cm}
\end{figure*}

\subsection{Theoretical Analysis}
\label{sec:appendix_proof}
Notice that for {\OA}, we only use original {LAMB} at warm-up, and then we essentially run error-compensated momentum {SGD} with coordinate-dependent learning rate $\frac{\gamma}{\sqrt{\v_{_{T_w}}}}$. Therefore here we consider the {LAMB}-based warm-up phase as a way to find a good precondition variance term $\v_{_{T_w}}$ to be used in the compression phase. Below we are going to introduce the convergence rate for the compression phase after warm-up. We first introduce some necessary assumptions, then we present the theoretical guarantee of the convergence rate for {\OA}.

\begin{assumption}\label{ass:global}
We make the following assumptions:
\begin{enumerate}
\item \textbf{Lipschitzian gradient:} $f(\cdot)$ is assumed to be  with $L$-Lipschitzian gradients, which means
  \begin{align*}
  \|\nabla f(\bm{x}) - \nabla f(\bm{y}) \| \leq L \|\bm{x} - \bm{y} \|,\quad \forall \bm{x},\forall \bm{y},
  \end{align*}
 \item\label{ass:var} \textbf{Bounded variance:}
The variance of the stochastic gradient is bounded
\begin{align*}
\mathbb E_{\bzeta^{\ti}\sim\mathcal{D}_i}\|\nabla F(\bm{x};\bm{\zeta}^{\ti}) - \nabla f(\bm{x})\|^2 \leq \sigma^2,\quad\forall \bm{x},\forall i.
\end{align*}
\item \textbf{Bounded magnitude  of error for $\C_{\omega}[\cdot]$:}
The magnitude of worker's local errors $\bm{\delta}_t^{(i)}$  and the server's global error $\overline{\bdelta}_t$, are assumed to be bounded by a constant $\epsilon$
\begin{align*}
\sum_{k=1}^n\mathbb E_{\omega} \left\|\bm{\delta}_t^{(i)}\right\|\leq \frac{\epsilon}{2},\quad
\sum_{i=1}^n\mathbb E_{\omega}\left\|\overline{\bdelta}_t\right\|\leq  \frac{\epsilon}{2},\quad\forall t,\forall i.
\end{align*}
\end{enumerate}
\end{assumption}
% Notice that previous work ~\citep{pmlr-v80-wu18d} needs an extra assumption to upper bound the magnitude of the gradient as $G$, which we did not require in our theoretical analysis. Therefore we need to design new proving strategies for the proof.

Next we present the main theorem for {\OA}.
\begin{theorem}\label{theo:global}
 Under Assumption~\ref{ass:global}, for {\OA}, we have the following convergence rate
 \begin{align*}
   &\left(1-\gamma_{\max} LV_{\max} - \frac{8\gamma^3_{\max} L^2}{(1-\beta)^2\gamma_{\min}} \right)\sum_{t=0}^T \mathbb E\|\nabla f(\bm{x}_t)\|^2\\
 \leq & \frac{2\mathbb E f(\bm{x}_{1}) - 2\mathbb Ef(\bm{y}^*)}{\gamma_{\min}}   + \frac{24\gamma^3_{\max}L^2V_{\max}^3\epsilon^2 T}{(1-\beta)^2\gamma_{\min}}  +  \frac{LV_{\max}\gamma^2_{\max}\sigma^2T}{n\gamma_{\min}} + \frac{8\gamma^3_{\max}L^2V^2_{\max}\sigma^2 T}{n(1-\beta)^2\gamma_{\min}},\numberthis\label{main:theo:eq}
\end{align*}
where $V= \text{diag}\left(1/\v_{T_w}^{(1)},1/\v_{T_w}^{(2)},\cdots,1/\v_{T_w}^{(d)}\right)$ is a diagonal matrix spanned by $\v_{_{T_w}}$.
\end{theorem}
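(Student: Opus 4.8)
The plan is to exploit the reduction already stated in the excerpt: during the compression phase \OA{} is exactly error-compensated momentum SGD with a fixed diagonal preconditioner $V^{1/2}=\mathrm{diag}(1/\sqrt{\v_{T_w}})$ and a per-layer, per-step scalar learning rate $\gamma c_t^\tl$ that the clippings in lines~\ref{line:1bitlamb_clip1}--\ref{line:1bitlamb_coeff} confine to a fixed interval. I would fold the scaling coefficient into the step size, writing $\Gamma_t=\mathrm{diag}(\gamma c_t^\tl)$ with every diagonal entry in $[\gamma_{\min},\gamma_{\max}]$, so that $\gamma_{\min},\gamma_{\max}$ together with $V_{\max}=\max_i 1/\v_{T_w}^{(i)}$ control all effective step sizes. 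The first concrete step is to collapse the two levels of compression into a single error sequence. With $\m_t=\frac1n\sum_i\m_t^{(i)}$ the uncompressed averaged momentum, the worker rule $\hat\m_t^{(i)}=\m_t^{(i)}+\bdelta_{t-1}^{(i)}-\bdelta_t^{(i)}$ and the server rule $\overline\m_t=\frac1n\sum_i\hat\m_t^{(i)}+\obdelta_{t-1}-\obdelta_t$ combine telescopically into
\begin{align*}
\overline\m_t=\m_t+\bm{\Delta}_{t-1}-\bm{\Delta}_t,\qquad \bm{\Delta}_t:=\tfrac1n\textstyle\sum_i\bdelta_t^{(i)}+\obdelta_t,
\end{align*}
and the error-magnitude bound in Assumption~\ref{ass:global} lets me bound $\E_\omega\|\bm{\Delta}_t\|$ by a constant multiple of $\epsilon$ uniformly in $t$.

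Next I would strip both the compression error and the momentum out of the iterate by introducing an auxiliary sequence $\z_t$, in the spirit of the error-feedback and momentum-SGD analyses: $\z_t$ is obtained from $\x_t$ by subtracting the frozen error-feedback offset $\Gamma_t V^{1/2}\bm{\Delta}_{t-1}$ and the momentum tail, so that it obeys the clean recursion $\z_{t+1}=\z_t-\Gamma_t V^{1/2}\overline{\g}_t$ driven only by the averaged stochastic gradient $\overline{\g}_t=\frac1n\sum_i\g_t^{(i)}$, whose conditional expectation is $\nabla f(\x_t)$. I would then apply the $L$-smoothness descent inequality to $\z_t$, take expectations, and split the right-hand side: the inner product $\langle\nabla f(\z_t),\z_{t+1}-\z_t\rangle$, after replacing $\nabla f(\z_t)$ by $\nabla f(\x_t)$ at the cost of an $L\|\z_t-\x_t\|$ penalty, produces the negative, preconditioner-weighted drift $-\langle\nabla f(\x_t),\Gamma_t V^{1/2}\nabla f(\x_t)\rangle$, while the quadratic term $\frac L2\E\|\z_{t+1}-\z_t\|^2=\frac L2\E\|\Gamma_t V^{1/2}\overline{\g}_t\|^2$ splits into a variance part of order $\gamma_{\max}^2 V_{\max}\sigma^2/n$ (using independence across the $n$ nodes, part~\ref{ass:var} of Assumption~\ref{ass:global}) and a $\|\nabla f(\x_t)\|^2$ part. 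The distances $\|\z_t-\x_t\|$ are controlled by unrolling the momentum as a geometric average of past gradients and invoking the bound on $\|\bm{\Delta}_t\|$; this is the step that injects the factors $1/(1-\beta)$ and the higher powers of $V_{\max}$ and $\epsilon$.

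Finally I would sum the per-step inequality over $t=0,\dots,T$, telescoping $f(\z_{t+1})-f(\z_t)$ into $\E f(\x_1)-\E f(\bm{y}^*)$ and collecting the residual terms into the error-feedback contribution $\frac{24\gamma_{\max}^3 L^2 V_{\max}^3\epsilon^2 T}{(1-\beta)^2\gamma_{\min}}$ and the two variance contributions $\frac{LV_{\max}\gamma_{\max}^2\sigma^2 T}{n\gamma_{\min}}$ and $\frac{8\gamma_{\max}^3 L^2 V_{\max}^2\sigma^2 T}{n(1-\beta)^2\gamma_{\min}}$ appearing in~\eqref{main:theo:eq}; moving the $\|\nabla f(\x_t)\|^2$ penalties $\gamma_{\max}LV_{\max}+\frac{8\gamma_{\max}^3L^2}{(1-\beta)^2\gamma_{\min}}$ back to the left then yields the stated coefficient on $\sum_t\E\|\nabla f(\x_t)\|^2$. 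The main obstacle I expect is the bookkeeping forced by the interaction between the \emph{coordinate-dependent} preconditioner $V$ and the \emph{layer- and step-dependent} scaling $c_t^\tl$: the natural drift is a preconditioner-weighted gradient norm, so extracting the plain $\|\nabla f(\x_t)\|^2$ of the theorem requires sandwiching the diagonal matrix $\Gamma_t V^{1/2}$ between its extreme values $\gamma_{\min},\gamma_{\max},V_{\max}$ while ensuring the time-varying $\Gamma_t$ inside the error-feedback offset does not spoil the telescoping. Matching the momentum-unrolling constants to the precise $1/(1-\beta)^2$ and $V_{\max}^3$ powers, and keeping the step size small enough that the left-hand coefficient stays positive, is the delicate part of the argument.
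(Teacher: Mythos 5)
Your proposal is correct and follows essentially the same route as the paper: collapsing the worker- and server-level compression errors into a single sequence, introducing an auxiliary iterate that subtracts the momentum tail and the error-feedback offset so that it moves by a plain stochastic-gradient step, applying the smoothness descent lemma, and bounding the momentum via a geometric-sum argument before telescoping. The only organizational difference is that the paper first changes variables via $h(\bm{z})=f(V^{1/2}\bm{z})$ to reduce the coordinate-wise preconditioner to a scalar-learning-rate lemma (absorbing $V$ into an effective Lipschitz constant $LV_{\max}$ and error bound $V_{\max}\epsilon^2$), whereas you carry the diagonal matrices explicitly and sandwich them by $\gamma_{\min},\gamma_{\max},V_{\max}$ at the end, which amounts to the same bookkeeping.
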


Given the generic result in Theorem~\ref{theo:global}, we obtain the convergence rate for {\OA} with appropriately chosen learning rate $\gamma$.

\begin{corollary}\label{coro:global}
Under Assumption~\ref{ass:global}, for {\OA}, choosing
$
\gamma = \gamma_{\max} = \gamma_{\min} =\frac{1-\beta}{16LV_{\max} + \sigma\sqrt{\frac{T}{n}} + T^{^{\frac{1}{3}}}\epsilon^{^{\frac{2}{3}}} } ,
$
we have the following convergence rate
\begin{align*}
\frac{V_{\max}}{T}\sum_{t=0}^{T-1}\mathbb{E}\|\nabla f(\bm{x}_t)\|^2_V \lesssim \frac{\sigma}{\sqrt{nT}} + \frac{\epsilon^{\frac{2}{3}}}{T^{\frac{2}{3}}} + \frac{1}{ T},
\end{align*}
where we treat $f(\bm{x}_1) - f^*$, $\beta$ and $L$ as constants.
\end{corollary}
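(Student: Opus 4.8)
The plan is to recognize, as the paragraph preceding the theorem already does, that the compression phase is exactly error-compensated momentum SGD with a fixed diagonal preconditioner $V=\mathrm{diag}(1/\v_{T_w}^{(1)},\dots,1/\v_{T_w}^{(d)})$ and layerwise learning rates that (after clipping) lie in $[\gamma_{\min},\gamma_{\max}]$. Writing the averaged update in centralized form $\x_{t+1}=\x_t-\gamma V\bar\m_t$, where $\bar\m_t$ is the twice-compressed global momentum, I would first absorb the compression error into a virtual iterate. Combining the worker-side error-feedback identity $\hat\m_t^{(i)}=\m_t^{(i)}+\bdelta_{t-1}^{(i)}-\bdelta_t^{(i)}$ with the server-side identity $\bar\m_t=\tfrac1n\sum_i\hat\m_t^{(i)}+\obdelta_{t-1}-\obdelta_t$, the applied momentum equals the true average momentum plus a telescoping aggregate error $\Delta_t=\tfrac1n\sum_i\bdelta_t^{(i)}+\obdelta_t$. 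Then the virtual iterate $\tilde{\x}_t:=\x_t-\gamma V\Delta_{t-1}$ obeys the clean, compression-free recursion $\tilde{\x}_{t+1}=\tilde{\x}_t-\gamma V\m_t$.

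Next I would strip the momentum by a second change of variables. From $\m_t=\beta\m_{t-1}+(1-\beta)\g_t$, the shifted iterate $\z_t:=\tilde{\x}_t+\tfrac{\beta}{1-\beta}(\tilde{\x}_t-\tilde{\x}_{t-1})$ telescopes into the pure stochastic-gradient recursion $\z_{t+1}=\z_t-\gamma V\g_t$, where $\g_t=\tfrac1n\sum_i\g_t^{(i)}$ satisfies $\E[\g_t\mid\mathcal F_t]=\nabla f(\x_t)$ and, by independence across the $n$ workers together with the bounded-variance part of Assumption~\ref{ass:global}, has averaged variance bounded by $\sigma^2/n$. I would then apply the $L$-smoothness descent inequality to $\z_t$, take conditional expectation, and decompose the first-order term via $\nabla f(\z_t)=\nabla f(\x_t)+(\nabla f(\z_t)-\nabla f(\x_t))$: the principal part contributes $-\gamma\|\nabla f(\x_t)\|_V^2$, while the perturbation is controlled by $\|\nabla f(\z_t)-\nabla f(\x_t)\|\le L\|\z_t-\x_t\|$ from the Lipschitz assumption.

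The crux — and the step I expect to be the main obstacle — is bounding $\|\z_t-\x_t\|$, since this gap carries \emph{both} the accumulated momentum tail and the feedback error $\gamma V\Delta_{t-1}$. Unrolling the two changes of variables, $\z_t-\x_t$ is a $\beta$-weighted geometric sum of past preconditioned momenta plus the error term; bounding the momentum tail costs a factor $\tfrac{1}{1-\beta}$ and bounding its square costs $\tfrac{1}{(1-\beta)^2}$, which is precisely the origin of the $\tfrac{1}{(1-\beta)^2}$ factors appearing throughout~\eqref{main:theo:eq}. The aggregate error enters through the error-magnitude bound of Assumption~\ref{ass:global} ($\sum_i\E\|\bdelta_t^{(i)}\|$ and $\sum_i\E\|\obdelta_t\|$ each at most $\epsilon/2$), producing the $\epsilon^2$ term, and the momentum-coupled gradient noise produces the remaining $\sigma^2/n$ term. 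I would split each resulting cross term with Young's inequality at carefully chosen weights, sending a small multiple of $\|\nabla f(\x_t)\|^2$ to the left-hand side — this is what creates the coefficient $\tfrac{8\gamma_{\max}^3L^2}{(1-\beta)^2\gamma_{\min}}$ — and bounding the rest.

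Finally I would assemble the pieces. The quadratic descent term $\tfrac{L}{2}\|\z_{t+1}-\z_t\|^2=\tfrac{L\gamma^2}{2}\|V\g_t\|^2$ splits into a gradient part $\tfrac{L\gamma^2}{2}\|\nabla f(\x_t)\|_V^2$ plus the variance contribution $\tfrac{LV_{\max}\gamma_{\max}^2\sigma^2T}{n\gamma_{\min}}$, and all $\epsilon$- and $\sigma$-dependent remainders accumulate linearly in $T$. Telescoping the descent inequality over the compression steps (using $\m_0=\boldsymbol 0$ so that $\z_1=\x_1$, and $f(\z_{T+1})\ge f(\bm{y}^*)$ at a minimizer) yields the boundary term $\tfrac{2\E f(\x_1)-2\E f(\bm{y}^*)}{\gamma_{\min}}$; replacing the coordinatewise $V$-weighted quantities by their worst-case scalars $V_{\max}$, $\gamma_{\max}$, $\gamma_{\min}$ turns the per-coordinate bound into~\eqref{main:theo:eq}. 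The corollary then follows from the stated choice of $\gamma$, which is exactly the value balancing the three dominant rates $\sigma/\sqrt{nT}$, $\epsilon^{2/3}/T^{2/3}$, and $1/T$.
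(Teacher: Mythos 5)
Your proposal is correct and follows essentially the same route as the paper: the paper's own proof of the corollary simply substitutes the stated $\gamma$ into Theorem~\ref{theo:global}, checks that the left-hand coefficient stays above $1/2$, and divides by $T$, while your supporting derivation of the theorem's bound (error-feedback virtual iterate, momentum auxiliary sequence, Young's inequality on the cross terms, telescoping) mirrors Lemma~\ref{lemma:supp_main}. The only presentational differences are that you keep the preconditioner $V$ explicit in the iterates where the paper changes coordinates to $h(\z)=f(V^{1/2}\z)$, and that you split the paper's single auxiliary sequence $\bm{y}_t$ into two successive changes of variables.
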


% This result suggests that
% \begin{itemize}
% \item ({\bf Comparison to SGD}) {\OA} essentially admits the same convergence rate as {SGD} in the sense that both of them admit the asymptotical convergence rate $O(1/\sqrt{T})$;
% \item ({\bf Linear Speedup}) The asymptotical convergence rate of {\OA} is $O(1/\sqrt{nT})$, the same convergence rate as Parallel SGD. 
% \end{itemize}

This result suggests that: {\OA} essentially admits the same convergence rate as distributed {SGD} in the sense that both of them admit the asymptotical convergence rate $O(1/\sqrt{nT})$, which means we can still achieve linear speedup w.r.t. the number of workers $n$.

% \subsection{Proof of the Theoretical Analysis}
For theoretical analysis, we follow the framework of previous work \citep{tang20211bit}. Below we first summarize the updating rule of {\OA}, followed by the proof of Theorem~\ref{theo:global}, then the proof of Corollary~\ref{coro:global}.
\subsubsection{Proof to the updating form}
Since our algorithm is equivalent to running a parameter-server prototype communication on each chunk of the gradient, so below we will assume a parameter-server model (which means the tensor is not required to be divided into $n$ chunks) for simplicity.
Although we use a layer-wise updating strategy in {\OA}, for the simplicity of theoretical analysis, we consider the case where we use the same updating rule for all the layer, but our  results can be easily applied to this layer-wise case.

According to the algorithm description in Algorithm~\ref{alg:1bitlamb}, at iteration $t+1$, the updating step of the momentum term $\bm{m}_{t+1}$ can be divided into two steps:
\begin{enumerate}
\item Local Update and Compress: each worker locally update $\bm{m}_t$ and use the error-compensate strategy for compressing.
\begin{align*}
\m_{t}^{(i)} =& \beta \bm{m}_t +  (1-\beta)\bm{g}_t^{(i)}\\
\m_{t+\frac{1}{2}}^{(i)} = & \C_{\omega}\left[\m_{t}^{(i)} + \frac{c_{t-2}}{c_{t-1}}\bm{\delta}_t^{(i)}\right]\\
\bm{\delta}_{t+1}^{(i)} = & \m_{t}^{(i)} + \bm{\delta}_t^{(i)} - \m_{t+\frac{1}{2}}^{(i)}.
\end{align*}
\item All workers send its  $\m_{t+\frac{1}{2}}^{(i)}$ to the server. The server takes the average over them and compress it again using error-compensation.
\begin{align*}
\m_{t+\frac{1}{2}} =& \frac{1}{n}\sum_{i=1}^n \m_{t+\frac{1}{2}}^{(i)}\\
\m_{t+1} = & \C_{\omega}\left[\m_{t+\frac{1}{2}} + \frac{c_{t-2}}{c_{t-1}}\bm{\delta}_t\right]\\
\bm{\delta}_{t+1} = & \m_{t+\frac{1}{2}} + \bm{\delta}_t - \m_{t+1}.
\end{align*}
\item The server broadcast $\m_{t+1}$ to all workers, and all workers update the local model  according to
\begin{align*}
\bm{x}_{t+1} = \bm{x}_{t} - \bm{\gamma c_{t-1}}\frac{\m_{t+1}}{\sqrt{\bm{v}_{T_{w}}^2}}.
\end{align*}
Notice that here we do not include the updating rule for $\{c_t\}$ because for the following part we will only view it as a scaling factor.

\end{enumerate}
So actually the updating rule above can be summarized as
\begin{align*}
\m_{t+1} =& \C_{\omega}\left[\m_{t+\frac{1}{2}} + \frac{c_{t-2}}{c_{t-1}}\bm{\delta}_t\right]\\
= & \m_{t+\frac{1}{2}} + \frac{c_{t-2}}{c_{t-1}}\bm{\delta}_t - \bm{\delta}_{t+1}\quad\text{(from the definition of $\bm{\delta}_{t+1}$)}\\
= & \frac{1}{n}\sum_{i=1}^n \C_{\omega}\left[\m_{t}^{(i)} + \frac{c_{t-2}}{c_{t-1}}\bm{\delta}_t^{(i)}\right]+ \frac{c_{t-2}}{c_{t-1}}\bm{\delta}_t - \bm{\delta}_{t+1}\\
= & \frac{1}{n}\sum_{i=1}^n\left( \m_{t}^{(i)} + \frac{c_{t-2}}{c_{t-1}}\bm{\delta}_t^{(i)} - \bm{\delta}_{t+1}^{(i)} \right) + \frac{c_{t-2}}{c_{t-1}}\bm{\delta}_t - \bm{\delta}_{t+1}\quad\text{(from the definition of $\bm{\delta}_{t+1}^{(i)}$)}\\
= & \beta \m_t + \frac{1-\beta}{n}\sum_{i=1}^n \bm{g}_t^{(i)} + \frac{c_{t-2}}{c_{t-1}}\left( \frac{1}{n}\sum_{i=1}^n \bm{\delta}_t^{(i)} + \bm{\delta}_t\right) - \left( \frac{1}{n}\sum_{i=1}^n \bm{\delta}_{t+1}^{(i)} + \bm{\delta}_{t+1}\right).
\end{align*}
Denote
\begin{align*}
\overline{\bm{g}}_t = & \frac{1}{n}\sum_{i=1}^n \bm{g}_t^{(i)}\\
\overline{\bm{\delta}}_{t} = & \frac{1}{n}\sum_{i=1}^n \bm{\delta}_t^{(i)} + \bm{\delta}_t,
\end{align*}
the update rule of $\m_{t}$ can be summarized as
\begin{align*}
\m_t = \beta\m_{t-1} + (1-\beta)\overline{\bm{g}}_t + \frac{c_{t-2}}{c_{t-1}}\overline{\bm{\delta}}_{t-1} - \overline{\bm{\delta}}_{t},
\end{align*}
and
\begin{align*}
\x_{t+1} = \x_t - \gamma Vc_{t-1}\m_t,
\end{align*}
where $V= \text{diag}(1/\sqrt{v_1},1/\sqrt{v_2},\cdots,1/\sqrt{v_d})$ is the a diagonal matrix that spanned with $\v_{T_{w}}$. In order to simlify the notation, we define $\gamma_t := \gamma c_{t-1}$ as a time-varying learning rate, which reduces the updating rule above into
\begin{align*}
\m_t =& \beta\m_{t-1} + (1-\beta)\overline{\bm{g}}_t + \frac{\gamma_{t-1}}{\gamma_t}\overline{\bm{\delta}}_{t-1} - \overline{\bm{\delta}}_{t},\\
\x_{t+1} =& \x_t - \gamma_t V\m_t.
\end{align*}

\subsubsection{Proof to Theorem~\ref{theo:global}}
Notice that in for {\OA}, the learning rate for each coordinate is different. In order to simplify our analysis, we instead consider another function that is defined as
\begin{align*}
H(\z) = F(V^{\frac{1}{2}}\z),
\end{align*}
also
\begin{align*}
h(\z) = f(V^{\frac{1}{2}}\z),
\end{align*}
where ${V} $ is a diagonal matrix.

In this case we have
\begin{align*}
{V}^{\frac{1}{2}}\nabla f({V}^{\frac{1}{2}}\z) = \nabla h(\z).
\end{align*}
Therefore the updating rule of {\OA} in the view of $h(\cdot)$ is
\begin{align*}
{V}^{\frac{1}{2}}\z_{t+1} = {V}^{\frac{1}{2}}\z_t - \gamma {V}^{\frac{1}{2}}\left({V}^{\frac{1}{2}}\m_t\right).
\end{align*}
It can be easily verified that
\begin{align*}
\bm{m}_t =& (1-\beta) \sum_{s=0}^t \beta^{t-s} \overline{\bm{g}}_s + \sum_{s=0}^t \beta^{t-s}( \overline{\bm{\delta}}_{s-1} - \overline{\bm{\delta}}_{s})\\
= & (1-\beta) \sum_{s=0}^t \beta^{t-s} \frac{1}{n}\sum_{i=1}^n\nabla F(V^{\frac{1}{2}}\z_t;\xi_t^{(i)}) + \sum_{s=0}^t \beta^{t-s}( \overline{\bm{\delta}}_{s-1} - \overline{\bm{\delta}}_{s})
\end{align*}
which means
\begin{align*}
{V}^{\frac{1}{2}}\bm{m}_t =& (1-\beta) \sum_{s=0}^t \beta^{t-s} \frac{1}{n}\sum_{i=1}^n{V}^{\frac{1}{2}}\nabla F(V^{\frac{1}{2}}\z_t;\xi_t^{(i)}) + \sum_{s=0}^t \beta^{t-s}{V}^{\frac{1}{2}}( \overline{\bm{\delta}}_{s-1} - \overline{\bm{\delta}}_{s})\\
= & (1-\beta) \sum_{s=0}^t \beta^{t-s} \frac{1}{n}\sum_{i=1}^n\nabla H(V^{\frac{1}{2}}\z_t;\xi_t^{(i)}) + \sum_{s=0}^t \beta^{t-s}{V}^{\frac{1}{2}}( \overline{\bm{\delta}}_{s-1} - \overline{\bm{\delta}}_{s})\\
= & (1-\beta) \sum_{s=0}^t \beta^{t-s} \overline{\bm{g}}_s(\z) + \sum_{s=0}^t \beta^{t-s}{V}^{\frac{1}{2}}( \overline{\bm{\delta}}_{s-1} - \overline{\bm{\delta}}_{s}),
\end{align*}
where $\overline{\bm{g}}_s(\z)$ is the corresponding averaged stochastic gradient computed in the view of loss function $h(\cdot)$.

Then, if we define $\m_t(\z) = {V}^{\frac{1}{2}}\m_t$, the updating rule of $\m_t(z)$ admits
\begin{align*}
\m_t(\z) = \beta\m_{t-1}(\z) + (1-\beta)\overline{\bm{g}}_t(\z) + {V}^{\frac{1}{2}}\overline{\bm{\delta}}_{t-1} - {V}^{\frac{1}{2}}\overline{\bm{\delta}}_{t}, \numberthis\label{supp:trans_eq1}
\end{align*}
and
\begin{align*}
{V}^{\frac{1}{2}}\z_{t+1} =& {V}^{\frac{1}{2}}\z_t - \gamma_t {V}^{\frac{1}{2}}\m_t(\z)\\
\z_{t+1} =& \z_t - \gamma_t \m_t(\z).\numberthis\label{supp:trans_eq2}
\end{align*}
From \eqref{supp:trans_eq1} and \eqref{supp:trans_eq2} we shall see that using different learning rate for each coordinate is equivalent to optimizing a new loss function defined on  scaling the original coordinate and using a uniform learning for all coordinates. Therefore below we first study the behavior of the error-compensated momentum SGD using a coordinate-independent time-varying learning rate.

Below are some critical lemmas for the proof of Theorem~\ref{theo:global}.
\begin{lemma}\label{lemma:seq}

Given two non-negative sequences $\{a_t\}_{t=1}^{\infty}$ and $\{b_t\}_{t=1}^{\infty}$ that satisfying
\begin{equation}
a_t =  \sum_{s=1}^t\rho^{t-s}b_{s}, \numberthis \label{eqn1}
\end{equation}
with $\rho\in[0,1)$, we have
\begin{align*}
D_k:=\sum_{t=1}^{k}a_t^2 \leq & % \frac{(1+\frac{2\rho}{1-\rho})\sum_{t=1}^kb_t^2-\rho^2a_k^2}{1-\rho^2} \\ \leq &
\frac{1}{(1-\rho)^2} \sum_{s=1}^kb_s^2.
\end{align*}
%where $S_k=\sum_{t=1}^{k}a_t$, and $D_k=\sum_{t=1}^{k}a_t^2$.
\end{lemma}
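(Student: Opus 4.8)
The plan is to read \eqref{eqn1} as a discrete convolution of the sequence $\{b_s\}$ against the geometric kernel $\rho^{t-s}$, and then to bound the $\ell^2$-norm of this convolution by the $\ell^2$-norm of $\{b_s\}$. The whole argument rests on one elementary fact: the geometric kernel has uniformly bounded total mass, $\sum_{s=1}^t \rho^{t-s} = \sum_{j=0}^{t-1}\rho^j \le \frac{1}{1-\rho}$, independently of $t$. This bound will be invoked twice, once for each summation index, which is exactly what produces the factor $\frac{1}{(1-\rho)^2}$ in the conclusion. We may assume $\rho\in(0,1)$, since for $\rho=0$ only the $s=t$ term survives, giving $a_t=b_t$ and hence $D_k=\sum_s b_s^2$ with equality.

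First I would apply the Cauchy--Schwarz inequality to the sum defining $a_t$, splitting the kernel symmetrically as $\rho^{t-s}=\rho^{(t-s)/2}\cdot\rho^{(t-s)/2}$. This yields
\begin{align*}
a_t^2 = \left(\sum_{s=1}^t \rho^{(t-s)/2}\,\rho^{(t-s)/2} b_s\right)^2 \le \left(\sum_{s=1}^t \rho^{t-s}\right)\left(\sum_{s=1}^t \rho^{t-s} b_s^2\right) \le \frac{1}{1-\rho}\sum_{s=1}^t \rho^{t-s} b_s^2 ,
\end{align*}
where the last step uses the uniform geometric-sum bound stated above.

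Next I would sum this pointwise estimate over $t=1,\dots,k$ and interchange the order of the resulting double sum. For each fixed $s$ the index $t$ runs over $s\le t\le k$, so after the swap the inner sum is $\sum_{t=s}^k \rho^{t-s}=\sum_{j=0}^{k-s}\rho^j\le\frac{1}{1-\rho}$, invoking the geometric bound a second time. Concretely,
\begin{align*}
D_k \le \frac{1}{1-\rho}\sum_{t=1}^k\sum_{s=1}^t \rho^{t-s} b_s^2 = \frac{1}{1-\rho}\sum_{s=1}^k b_s^2\sum_{t=s}^k \rho^{t-s} \le \frac{1}{(1-\rho)^2}\sum_{s=1}^k b_s^2 ,
\end{align*}
which is the claimed inequality.

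I do not expect a genuine obstacle: the argument is just a two-sided application of the $\ell^1$ bound on the geometric kernel with a single Cauchy--Schwarz step in between. The only points meriting a line of care are the bookkeeping in the summation interchange (confirming the range becomes $s\le t\le k$) and the degenerate case $\rho=0$, which is handled separately as noted so that the symmetric splitting $\rho^{(t-s)/2}$ is only used when $\rho>0$.
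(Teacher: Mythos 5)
Your proof is correct and follows essentially the same route as the paper: both arguments reduce to the pointwise bound $a_t^2 \le \frac{1}{1-\rho}\sum_{s=1}^t\rho^{t-s}b_s^2$ followed by an interchange of summation and a second application of the geometric-sum bound. The only cosmetic difference is that you obtain the pointwise bound via Cauchy--Schwarz with the symmetric kernel split, whereas the paper expands $a_t^2$ as a double sum and applies $b_sb_r\le\frac{b_s^2+b_r^2}{2}$ to the cross terms --- two equivalent ways of doing the same estimate.
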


\begin{proof}
From the definition, we have
\begin{align*}
S_k= & \sum_{t=1}^{k}\sum_{s=1}^t\rho^{t-s}b_{s}
=  \sum_{s=1}^{k}\sum_{t=s}^k\rho^{t-s}b_{s}
=  \sum_{s=1}^{k}\sum_{t=0}^{k-s}\rho^{t}b_{s}
\leq  \sum_{s=1}^{k}{b_{s}\over 1-\rho}, \numberthis \label{eqn3}\\
D_k=  & \sum_{t=1}^{k}\sum_{s=1}^t\rho^{t-s}b_{s}\sum_{r=1}^t\rho^{t-r}b_{r}\\
= & \sum_{t=1}^{k}\sum_{s=1}^t\sum_{r=1}^t\rho^{2t-s-r}b_{s}b_{r} \\
\leq &  \sum_{t=1}^{k}\sum_{s=1}^t\sum_{r=1}^t\rho^{2t-s-r}{b_{s}^2+b_{r}^2\over2}\\
= & \sum_{t=1}^{k}\sum_{s=1}^t\sum_{r=1}^t\rho^{2t-s-r}b_{s}^2 \\
\leq  & {1\over 1-\rho}\sum_{t=1}^{k}\sum_{s=1}^t\rho^{t-s}b_{s}^2\\
\leq & {1\over (1-\rho)^2}\sum_{s=1}^{k}b_{s}^2, \quad \text{(due to \eqref{eqn3})}
\end{align*}
which completes the proof.
\end{proof}

\begin{lemma}\label{lemma:supp_main}
Under Assumption~\ref{ass:global}, for any sequence that follows the updating rule of
\begin{align*}
\x_{t+1} =& \x_t - \gamma_t \m_t\\
\m_t =& \beta\m_{t-1} + (1-\beta)\overline{\bm{g}}_t + \frac{\gamma_{t-1}}{\gamma_t}\overline{\bm{\delta}}_{t-1} - \overline{\bm{\delta}}_{t},
\end{align*}
if 
\begin{align*}
\mathbb E \overline{\g}_t =  \nabla & f(\x_t),\quad\mathbb E \|\overline{\g}_t - \nabla f(\x_t)\|^2\leq \frac{\sigma^2}{n},\quad \mathbb E\|\overline{\bm{\delta}}_t\|^2\leq \epsilon^2,\quad \forall t,\\
&\|\nabla f(\bm{x}) - \nabla f(\bm{y}) \| \leq L \|\bm{x} - \bm{y} \|,\quad \forall \bm{x},\forall \bm{y},
\end{align*}
then we can guarantee that
\begin{align*}
&\left(1-\gamma L - \frac{2\gamma^2 L^2}{(1-\beta)^2} \right)\sum_{t=0}^T \mathbb E\|\nabla f(\bm{x}_t)\|^2\\
 \leq & \frac{2\mathbb E f(\bm{x}_{1}) - 2\mathbb Ef(\bm{x}^*)}{\gamma}   + \frac{6\gamma^2L^2\epsilon^2 T}{(1-\beta)^2}  +  \frac{L\gamma\sigma^2T}{n} + \frac{2\gamma^2L^2\sigma^2 T}{n(1-\beta)^2}
\end{align*}

\end{lemma}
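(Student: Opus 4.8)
The plan is to reduce the error-compensated momentum SGD recursion to an ordinary (uncompressed) SGD recursion on an auxiliary ``virtual'' sequence, apply the descent lemma to that clean sequence, and then control the gap between the virtual and actual iterates with the help of Lemma~\ref{lemma:seq}. Specializing to a constant learning rate (so that $\gamma_{t-1}/\gamma_t = 1$, which is the regime in which the stated bound is phrased), I would introduce
\[
\bm{y}_t := \x_t - \frac{\gamma\beta}{1-\beta}\m_{t-1} - \frac{\gamma}{1-\beta}\overline{\bdelta}_{t-1},
\]
which satisfies $\bm{y}_1 = \x_1$ because $\m_0 = \overline{\bdelta}_0 = \boldsymbol{0}$. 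Substituting both update rules and using $\m_t - \beta\m_{t-1} = (1-\beta)\overline{\g}_t + \overline{\bdelta}_{t-1} - \overline{\bdelta}_t$, the momentum memory and the two error-feedback terms cancel exactly, leaving the clean recursion $\bm{y}_{t+1} = \bm{y}_t - \gamma\overline{\g}_t$. This is the \emph{key device}: the dynamics of $\bm{y}_t$ are driven only by $\overline{\g}_t$, whose conditional mean is $\nabla f(\x_t)$.

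Next I would apply $L$-smoothness along the clean step to obtain $f(\bm{y}_{t+1}) \le f(\bm{y}_t) - \gamma\langle\nabla f(\bm{y}_t),\overline{\g}_t\rangle + \frac{L\gamma^2}{2}\|\overline{\g}_t\|^2$, and take the conditional expectation using $\E\overline{\g}_t = \nabla f(\x_t)$ and $\E\|\overline{\g}_t - \nabla f(\x_t)\|^2 \le \sigma^2/n$. The gradient is evaluated at $\x_t$ while the descent is on $\bm{y}_t$, so I would reconcile the mismatch through the polarization identity $\langle\nabla f(\bm{y}_t),\nabla f(\x_t)\rangle = \frac12\|\nabla f(\bm{y}_t)\|^2 + \frac12\|\nabla f(\x_t)\|^2 - \frac12\|\nabla f(\bm{y}_t)-\nabla f(\x_t)\|^2$ together with $\|\nabla f(\bm{y}_t)-\nabla f(\x_t)\|^2 \le L^2\|\bm{y}_t - \x_t\|^2$. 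Discarding the harmless $-\frac{\gamma}{2}\|\nabla f(\bm{y}_t)\|^2$ term, summing over $t$, telescoping the $f(\bm{y}_t)$ values and using $f(\bm{y}_{T+1}) \ge f(\x^*)$ produces an inequality of the shape $\frac{\gamma}{2}(1-L\gamma)\sum_t\E\|\nabla f(\x_t)\|^2 \le \E f(\x_1) - \E f(\x^*) + \frac{\gamma L^2}{2}\sum_t\E\|\bm{y}_t-\x_t\|^2 + \frac{L\gamma^2\sigma^2 T}{2n}$.

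The final and most calculation-heavy step is bounding $\sum_t\E\|\bm{y}_t - \x_t\|^2$. Since $\bm{y}_t - \x_t = -\frac{\gamma\beta}{1-\beta}\m_{t-1} - \frac{\gamma}{1-\beta}\overline{\bdelta}_{t-1}$, applying $\|a+b\|^2 \le 2\|a\|^2 + 2\|b\|^2$ and $\beta^2 \le 1$ reduces matters to bounding $\sum_t\E\|\m_{t-1}\|^2$ and $\sum_t\E\|\overline{\bdelta}_{t-1}\|^2$. The error sum is immediate from $\E\|\overline{\bdelta}_t\|^2 \le \epsilon^2$. For the momentum I would unroll $\m_t$ as an exponentially weighted sum of the past $\overline{\g}_s$ plus an accumulated error-feedback part, and invoke Lemma~\ref{lemma:seq} with $\rho=\beta$ to get $\sum_t\E\|\m_t\|^2 \lesssim \sum_s\E\|\overline{\g}_s\|^2 + (\text{an }\epsilon^2\text{ remainder})$, followed by $\E\|\overline{\g}_s\|^2 \le \E\|\nabla f(\x_s)\|^2 + \sigma^2/n$. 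Substituting back, the piece proportional to $\sum_s\E\|\nabla f(\x_s)\|^2$ is moved to the left, which is precisely what converts $\frac{\gamma}{2}(1-L\gamma)$ into $\frac{\gamma}{2}\big(1-L\gamma - \frac{2\gamma^2L^2}{(1-\beta)^2}\big)$; the $\sigma^2$ remnant becomes $\frac{2\gamma^2L^2\sigma^2T}{n(1-\beta)^2}$, and the $\epsilon^2$ contributions (both from the direct error term in $\bm{y}_t-\x_t$ and from the error feedback inside $\m_t$) combine into $\frac{6\gamma^2L^2\epsilon^2T}{(1-\beta)^2}$. Multiplying through by $2/\gamma$ yields the claimed bound.

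I expect the main obstacle to be the construction and verification of the auxiliary sequence $\bm{y}_t$: the coefficients $\frac{\gamma\beta}{1-\beta}$ and $\frac{\gamma}{1-\beta}$ in front of $\m_{t-1}$ and $\overline{\bdelta}_{t-1}$ must be chosen so that the momentum term and both layers of error feedback cancel simultaneously, and a genuinely time-varying learning rate leaves a residual $\frac{\beta}{1-\beta}(\gamma_{t-1}-\gamma_t)\m_{t-1}$ that would need separate control; this is presumably why the clean bound is stated for a single $\gamma$, with the layerwise/time-varying factors and the preconditioner absorbed into the $\gamma_{\max}/\gamma_{\min}$ and $V_{\max}$ constants when passing to Theorem~\ref{theo:global}. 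Once the virtual sequence is in hand, the remainder is bookkeeping of the $\sigma^2$ and $\epsilon^2$ coefficients through the Lemma~\ref{lemma:seq} estimate.
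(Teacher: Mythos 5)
Your proposal follows essentially the same route as the paper: the auxiliary sequence you define is exactly the paper's virtual sequence $\bm{y}_t = \x_t - \frac{\gamma_t\m_t + \gamma_{t-1}\obdelta_t}{1-\beta}$ up to an index shift (substituting $\x_t = \x_{t-1}-\gamma\m_{t-1}$ shows your $\bm{y}_t$ equals the paper's $\bm{y}_{t-1}$), and the subsequent steps --- descent lemma on the clean recursion, polarization to reconcile $\nabla f(\bm{y}_t)$ with the gradient at the actual iterate, bounding the virtual--actual gap, and invoking Lemma~\ref{lemma:seq} with $\rho=\beta$ to absorb $\sum_t\E\|\m_t\|^2$ into the left-hand side --- all coincide with the paper's argument. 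Your closing remark about the time-varying residual correctly identifies why the paper builds the $\frac{\gamma_{t-1}}{\gamma_t}$ factor into the error-feedback term and then states the clean bound for a single $\gamma$.
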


\begin{proof}
Instead of investigating $\bm{x}_t$ directly, we introduce the following sequence
\begin{align*}
\bm{y}_t = &\bm{x}_t - \frac{\gamma_t \bm{m}_{t}+ \gamma_{t-1}\obdelta_{t}}{1-\beta} .
\end{align*}
The updating rule of $\bm{y}_t$ admits
\begin{align*}
\bm{y}_{t+1} - \bm{y}_t = & \bm{x}_{t+1} - \bm{x}_t - \frac{\gamma_t}{1- \beta}(\bm{m}_{t+1} - \bm{m}_{t}) + \frac{\gamma_t\obdelta_{t+1} - \gamma_{t-1}\obdelta_t}{1-\beta} \\
= & -\gamma_t \bm{m}_t  - \frac{\gamma_t}{1-\beta}(\beta \bm{m_t} + (1-\beta)\bm{g}_{t+1} - \overline{\bm{\delta}}_{t+1} + \frac{\gamma_{t-1}}{\gamma_t}\overline{\bm{\delta}}_{t} )+ \frac{\gamma_t\obdelta_{t+1} - \gamma_{t-1}\obdelta_t}{1-\beta}\\
= & -\gamma_t\bm{g}_{t+1}.
\end{align*}
Since $f(\cdot)$ is with L-Lipschitzian, we have
\begin{align*}
&\mathbb E f(\bm{y}_{t+1}) - \mathbb Ef(\bm{y}_{t})\\
\leq & \mathbb E\left\langle \nabla f(\bm{y}_t), \bm{y}_{t+1} - \bm{y}_t \right\rangle + \frac{L}{2}\mathbb E\left\|\bm{y}_{t+1} - \bm{y}_t \right\|^2\\
=& -\gamma_t \mathbb E\left\langle \nabla f(\bm{y}_t), \bm{g}_{t+1}\right\rangle + \frac{L\gamma^2_{t}}{2}\mathbb E\|\bm{g}_{t+1}\|^2\\
=& -\gamma_t\mathbb E\left\langle \nabla f(\bm{y}_t), \nabla f(\bm{x}_{t+1})\right\rangle + \frac{L\gamma^2_{t}}{2}\mathbb E\|\bm{g}_{t+1}\|^2\\
= & -\frac{\gamma_t}{2} \mathbb E\|\nabla f(\bm{x}_{t+1})\|^2 - \frac{\gamma_t}{2}\mathbb E\|\nabla f(\bm{y}_t)\|^2 + \frac{\gamma_t}{2} \mathbb E\|\nabla f(\bm{x}_{t+1}) - \nabla f(\bm{y}_t)\|^2+ \frac{L \gamma^2_{t}}{2}\mathbb E\|\bm{g}_{t+1}\|^2\\
\leq & -\frac{\gamma_t}{2} \mathbb E\|\nabla f(\bm{x}_{t+1})\|^2 + \frac{L\gamma_{t} }{2}\mathbb E\|\bm{x}_{t+1} - \bm{y}_t \|^2+ \frac{L \gamma^2_{t}}{2}\mathbb E\|\bm{g}_{t+1}\|^2\\
= & -\frac{\gamma_t}{2} \mathbb E\|\nabla f(\bm{x}_{t+1})\|^2 + \frac{\gamma^3_{t} L^2}{2}\mathbb E\left\|\frac{(2-\beta)\bm{m}_t}{1-\beta} + \frac{\frac{\gamma_{t-1}}{\gamma_t}\overline{\bm{\delta}}_{t-1}}{1-\beta} \right\|^2+ \frac{L \gamma^2_{t}}{2}\mathbb E\|\bm{g}_{t+1}\|^2\\
\leq & -\frac{\gamma_t}{2} \mathbb E\|\nabla f(\bm{x}_{t+1})\|^2 + \frac{4\gamma^3_{t} L^2}{(1-\beta)^2}\mathbb E\|\bm{m}_t\|^2 + \frac{4\gamma^3_{t} L^2}{(1-\beta)^2}\mathbb E\|\overline{\bm{\delta}}_{t-1}\|^2  +  \frac{L \gamma^2_{t}}{2}\mathbb E\|\bm{g}_{t+1}\|^2 \quad(r_{threshold}< 1)\\
\leq & -\frac{\gamma_t}{2} \mathbb E\|\nabla f(\bm{x}_{t+1})\|^2 + \frac{4\gamma^3_{t} L^2}{(1-\beta)^2}\mathbb E\|\bm{m}_t\|^2 + \frac{4\gamma^3_{t} \epsilon^2}{(1-\beta)^2}  +  \frac{L \gamma^2_{t}}{2}\mathbb E\|\bm{g}_{t+1}\|^2\\
\leq & -\frac{\gamma_t}{2} \mathbb E\|\nabla f(\bm{x}_{t+1})\|^2 + \frac{4\gamma^3_{t} L^2}{(1-\beta)^2}\mathbb E\|\bm{m}_t\|^2 + \frac{4\gamma^3_{t} \epsilon^2}{(1-\beta)^2}  +  \frac{L \gamma^2_{t}}{2}\mathbb E\|\nabla f(\bm{x}_{t+1})\|^2 + \frac{L\gamma^2_t\sigma^2}{2n}.
\end{align*}
Summing up the equation above from $t=0$ to $t=T$ we get
\begin{align*}
\mathbb E f(\bm{y}_{T+1}) - \mathbb Ef(\bm{y}_{0}) \leq \sum_{t=0}^T -\frac{(1-\gamma_tL)\gamma_t}{2}\mathbb E\|\nabla f(\bm{x}_t)\|^2 + \sum_{t=0}^T\frac{4\gamma^3_{t} L^2}{(1-\beta)^2}\mathbb E\|\bm{m}_t\|^2 + \frac{4L^2\epsilon^2\sum_{t=0}^T\gamma^3_{t} }{(1-\beta)^2}  +  \frac{L\sigma^2\sum_{t=0}^T\gamma^2_{t} }{2n}.
\end{align*}
Therefore if we $\gamma_t <\frac{1}{2L}$ and it's within certain range $\gamma_t\in[\gamma_{\min},\gamma_{\max}]$, the equation above leads to
\begin{align*}
&(1-\gamma_{\max} L)\sum_{t=0}^T \mathbb E\|\nabla f(\bm{x}_t)\|^2\\
\leq & \frac{2\mathbb E f(\bm{y}_{0}) - 2\mathbb Ef(\bm{y}_{T+1})}{\gamma_{\min}}  +  \frac{8\gamma^3_{\max} L^2}{(1-\beta)^2\gamma_{\min}}\sum_{t=0}^T\mathbb E\|\bm{m}_t\|^2 + \frac{8\gamma^3_{\max}L^2\epsilon^2 T}{(1-\beta)^2\gamma_{\min}}  +  \frac{L\gamma_{\max}^2\sigma^2 T}{n\gamma_{\min}}.\numberthis\label{supp:final_eq1}
\end{align*}

Notice that we have
\begin{align*}
\bm{m}_t =& (1-\beta) \sum_{s=0}^t \beta^{t-s} \overline{\bm{g}}_s + \sum_{s=0}^t \beta^{t-s}( \overline{\bm{\delta}}_{s-1} - \overline{\bm{\delta}}_{s})
\end{align*}
which by using Lemma~\ref{lemma:seq}, we have
\begin{align*}
\sum_{t=0}^T\|\bm{m}_t\|^2 \leq \sum_{t=0}^T \|\bm{g}_t\|^2 + \frac{2}{(1-\beta)^2}\sum_{t=0}^T \|\overline{\bm{\delta}}_t\|^2 \leq \sum_{t=0}^T \|\nabla f(\bm{x}_t)\|^2 + \frac{\sigma^2T}{n} + \frac{2\epsilon^2T}{(1-\beta)^2} . \numberthis\label{supp:final_nound_mt}
\end{align*}
Combing \eqref{supp:final_eq1} and \eqref{supp:final_nound_mt} together we get
\begin{align*}
&\left(1-\gamma_{\max} L - \frac{8\gamma^3_{\max} L^2}{(1-\beta)^2\gamma_{\min}} \right)\sum_{t=0}^T \mathbb E\|\nabla f(\bm{x}_t)\|^2\\
 \leq & \frac{2\mathbb E f(\bm{y}_{0}) - 2\mathbb Ef(\bm{y}_{T+1})}{\gamma_{\min}}   + \frac{24\gamma^3_{\max}L^2\epsilon^2 T}{(1-\beta)^2\gamma_{\min}}  +  \frac{L\gamma^2_{\max}\sigma^2T}{n\gamma_{\min}} + \frac{8\gamma^3_{\max}L^2\sigma^2 T}{n(1-\beta)^2\gamma_{\min}}\\
 \leq & \frac{2\mathbb E f(\bm{x}_{1}) - 2\mathbb Ef(\bm{y}^*)}{\gamma_{\min}}   + \frac{24\gamma^3_{\max}L^2\epsilon^2 T}{(1-\beta)^2\gamma_{\min}}  +  \frac{L\gamma^2_{\max}\sigma^2T}{n\gamma_{\min}} + \frac{8\gamma^3_{\max}L^2\sigma^2 T}{n(1-\beta)^2\gamma_{\min}}.
\end{align*}
\end{proof}

\paragraph{Proof to Theorem~\ref{theo:global}} Since using a per-coordinate learning rate for loss function $f(\cdot)$ is equivalent to use a constant learning for all coordinates but for loss function $h(\cdot)$, the only two thing that change are
\begin{itemize}
\item \textbf{Different L-Lipschitzian coefficient}: the L-Lipschitzian coefficient for $h(\cdot)$ is
\begin{align*}
\|\nabla h(\x) - \nabla h(\bm{y})\|^2 =& \left\|V^{\frac{1}{2}} \nabla f(V^{\frac{1}{2}}\x) - V^{\frac{1}{2}} \nabla f(V^{\frac{1}{2}}\bm{y}) \right\|^2 \\
= & \left\| \nabla f(V^{\frac{1}{2}}\x) - \nabla f(V^{\frac{1}{2}}\bm{y}) \right\|^2_V\\
\leq & L^2\left\| V^{\frac{1}{2}}\x - V^{\frac{1}{2}}\bm{y} \right\|^2_V\\
=& L^2\|\x - \bm{y}\|^2_{V^2}\\
\leq & L^2V_{\max}^2\|\x - \bm{y}\|^2.
\end{align*}
Therefore the effective L-Lipschitzian coefficient of $h(\x)$ is $LV_{\max}$
\item \textbf{Different definition of $\overline{\bm{\delta}}_t$}: from \eqref{supp:trans_eq1} we shall see that actually the compression error in the view of $h(\cdot)$ is $V^{\frac{1}{2}}\overline{\bm{\delta}}_t $, so in this case we have
\begin{align*}
\mathbb E\|V^{\frac{1}{2}}\overline{\bm{\delta}}_t\|^2 \leq V_{\max}\epsilon^2
\end{align*}

\end{itemize}
\begin{proof}
From Lemma~\ref{lemma:supp_main}, we have
\begin{align*}
&\left(1-\gamma_{\max} LV_{\max} - \frac{8\gamma^3_{\max} L^2}{(1-\beta)^2\gamma_{\min}} \right)\sum_{t=0}^T \mathbb E\|\nabla f(\bm{x}_t)\|^2\\
 \leq & \frac{2\mathbb E f(\bm{x}_{1}) - 2\mathbb Ef(\bm{y}^*)}{\gamma_{\min}}   + \frac{24\gamma^3_{\max}L^2V_{\max}^3\epsilon^2 T}{(1-\beta)^2\gamma_{\min}}  +  \frac{LV_{\max}\gamma^2_{\max}\sigma^2T}{n\gamma_{\min}} + \frac{8\gamma^3_{\max}L^2V^2_{\max}\sigma^2 T}{n(1-\beta)^2\gamma_{\min}}.
\end{align*}
% Since $V_{\max} = \frac{1}{\sqrt{v_{\min}}}$, therefore the equation above becomes
% \begin{align*}
%   &\left(1-\frac{\gamma L}{v_{\min}} - \frac{2\gamma^2 L^2}{(1-\beta)^2v_{\min}^2} \right)\sum_{t=0}^T \mathbb E\|\nabla f(\bm{x}_t)\|^2_{V}\\
%   \leq & \frac{2\mathbb E f(\bm{x}_{0}) - 2\mathbb Ef(\bm{x}^*)}{\gamma}   + \frac{6\gamma^2L^2\epsilon^2 T}{(1-\beta)^2v_{\min}^3}  +  \frac{L\gamma \sigma^2T}{nv_{\min}} + \frac{2\gamma^2L^2\sigma^2  T}{n(1-\beta)^2v_{\min}^2},
% \end{align*}
\end{proof}

\subsubsection{Proof to Corollary~\ref{coro:global}}
\begin{proof}
By choosing $\gamma_{\max} = \gamma_{\min} =\frac{1-\beta}{16LV_{\max} + \sigma\sqrt{\frac{T}{n}} + T^{^{\frac{1}{3}}}\epsilon^{^{\frac{2}{3}}} }   $, we can guarantee that
\begin{align*}
1- \gamma L - \frac{8\gamma^2 L^2V_{\max}^2}{(1-\beta)^2} \geq & \frac{1}{2}.
\end{align*}
So \eqref{main:theo:eq} leads to
\begin{align*}
\sum_{t=0}^T \mathbb E\|\nabla f(\bm{x}_t)\|^2_V \leq & \frac{2\left(\mathbb E f(\bm{y}_{0}) - f(\bm{y}^*) \right)}{(1-\beta)}\left(16LV_{\max} + \sigma\sqrt{\frac{T}{n}} + T^{^{\frac{1}{3}}}\epsilon^{^{\frac{2}{3}}} \right)\\
& + \left((1-\beta)L\sqrt{T}+ 2L^2V_{\max}^2\right)\frac{\sigma}{\sqrt{n}} +   32L^2\epsilon^{\frac{2}{3}}T^{\frac{1}{3}}V_{\max}^3\\
\frac{1}{T}\sum_{t=0}^T \mathbb E\|\nabla f(\bm{x}_t)\|^2_V \leq & \frac{2\left(\mathbb E f(\bm{y}_{0}) - f(\bm{y}^*) \right)}{(1-\beta)}\left(\frac{16LV_{\max}}{T} + \frac{\sigma}{\sqrt{nT}} + T^{^{-\frac{2}{3}}}\epsilon^{^{\frac{2}{3}}} \right)\\
& + \left((1-\beta)L+ \frac{2L^2V_{\max}^2}{\sqrt{T}}\right)\frac{\sigma}{\sqrt{nT}} +   32L^2\epsilon^{\frac{2}{3}}T^{-\frac{2}{3}}V_{\max}^3.
\end{align*}
Treating $f(\bm{y}_1) - f^*$, $\beta$ and $L$ as constants, from the inequality above we get
\begin{align*}
\frac{1}{T}\sum_{t=0}^T \mathbb E\|\nabla f(\bm{x}_t)\|^2 \lesssim \frac{\sigma}{\sqrt{nT}} + \frac{\epsilon^{\frac{2}{3}}}{T^{\frac{2}{3}}} + \frac{1}{T}.
\end{align*}
It completes the proof.
\end{proof}

\fi

\end{document}